\theoremstyle{plain}
\newtheorem{theorem}{Theorem}[section]
\newtheorem{proposition}[theorem]{Proposition}
\newtheorem{lemma}[theorem]{Lemma}
\newtheorem{corollary}[theorem]{Corollary}
\theoremstyle{definition}
\newtheorem{definition}[theorem]{Definition}
\theoremstyle{remark}
\newcommand{\E}{\mathbb{E}}
\def\cN{\mathcal{N}}
\title{Improved Differentially Private Regression via Gradient Boosting}
\author[1]{Shuai Tang\footnote{Shuai Tang is the lead author; all other authors are listed in alphabetical order. shuat@amazon.com}}
\author[1]{Sergul Aydore}
\author[1,3]{Michael Kearns}
\author[2]{Saeyoung Rho}
\author[1,3]{Aaron Roth}
\author[1]{\\Yichen Wang}
\author[1,5]{Yu-Xiang Wang}
\author[1,4]{Zhiwei Steven Wu}
\affil[1]{Amazon AWS AI/ML}
\affil[2]{Columbia University}
\affil[3]{University of Pennsylvania}
\affil[4]{Carnegie Mellon University}
\affil[5]{University of California, Santa Barbara}
\begin{document}
\date{}
\maketitle

\begin{abstract}
We revisit the problem of differentially private squared error linear regression. We observe that existing state-of-the-art methods are sensitive to the choice of hyperparameters --- including the ``clipping threshold'' that cannot be set optimally in a data-independent way. We give a new algorithm for private linear regression based on gradient boosting. We show that our method consistently improves over the previous state of the art when the clipping threshold is taken to be fixed without knowledge of the data, rather than optimized in a non-private way --- and that even when we optimize the hyperparameters of competitor algorithms non-privately, our algorithm is no worse and often better. In addition to a comprehensive set of experiments, we give theoretical insights to explain this behavior. 
\end{abstract}

\section{Introduction}
Squared error linear regression is a 
basic, 
foundational method in statistics and machine learning. Absent other constraints, it has an optimal closed-form solution. A consequence of this is that linear regression parameters have a deterministic relationship with the data they are fitting, which can leak private information. As a result, there is a substantial body of work aiming to approximate the solution to least squares linear regression with the protections of differential privacy \cite{reg0,reg1,reg2,wang2018revisiting,reg3,reg4,amin2022easy}.

We highlight the AdaSSP (``Adaptive Sufficient Statistics Perturbation'')  algorithm \cite{wang2018revisiting} which obtains state-of-the-art theoretical and practical performance when the maximum norm of the features and labels are known---these bounds are used to scale the noise added for privacy. When a data-independent bound on the magnitude of the data is not known, in order to promise differential privacy, they must be clipped at some data-independent threshold, which can substantially harm performance. In this work, we give a new algorithm for private linear regression that substantially mitigates this issue and leads to improved accuracy across a range of datasets and clipping thresholds.

Our approach is both conceptually and computationally simple: we apply gradient boosting \cite{friedman2001greedy}, using a linear model as the base learner, and to incorporate privacy guarantees, at each boosting round, the linear model is solved using AdaSSP. When applied to a squared error objective, gradient boosting is exceedingly simple: it maintains a linear combination of regression models, repeatedly fitting a new regression model to the \emph{residuals} of the current model, and then adding the new model to the linear combination. Absent privacy constraints, gradient boosting for linear regression does not improve performance, 
because linear models are closed under linear combinations, and squared error regression can be optimally solved over the set of all linear models in closed form. 
Nevertheless, in the presence of privacy constraints and in the absence of knowledge of the data scale (so that we must use a data independent clipping threshold), we show in an extensive set of experiments that gradient BoostedAdaSSP substantially improves on the performance of AdaSSP alone. Moreover, we show that our BoosedAdaSSP algorithm outperforms other competitive differentially private solutions to linear regression in different conditions, including gradient descent on the squared loss objective, and interestingly performs better than a tree-based private boosting algorithm. We also show that our algorithm is less sensitive to hyperparameter selection.

We also provide stylized theoretical explanations of the empirical results. In the zero-dimensional case, AdaSSP reduces to computing the empirical mean of the clipped data, and aggressive clipping thresholds can cause the bias of empirical mean to be arbitrarily large.
In this setting, gradient boosting with AdaSSP as a base learner corresponds to iteratively updating an estimator of the mean by the clipped empirical residuals
, i.e. the empirical mean of the difference between the current mean estimate and the data. In Section \ref{ss.clipping theory}, we show that, for Gaussian data, the boosting method converges to the true mean for \textit{any} non-zero clipping threshold.
The intuition behind this improvement of boosting over the one-shot empirical mean is that, even clipped estimates of the mean are directionally correct, which serves to further de-bias the current estimate and reduce the negative effect of aggressive clipping. 
The convergence of our boosted algorithm under arbitrary clipping provides a significant improvement over AdaSSP, especially when the clipping bound must be independent to the data.

Finally, we show that BoostedAdaSSP can sometimes out-perform differentially private boosted trees \cite{nori2021dpebm} as well, a phenomenon that we do not observe absent privacy. This contributes to an important conceptual message: that the best learning algorithms under the constraint of differential privacy are not necessarily ``privatized'' versions of the best learning algorithms absent privacy---differential privacy rewards algorithmic simplicity.

\subsection{Additional Related Work}
Because of its fundamental importance, linear regression has been the focus of a great deal of attention in differential privacy \cite{kifer2012private,reg0,reg1,reg2,wang2018revisiting,reg3,reg4,amin2022easy}, using techniques including private gradient descent \cite{bassily2014private,abadi2016deep}, output and objective perturbation \cite{chaudhuri2011differentially}, and perturbation of sufficient statistics \cite{vu2009differential}. As already mentioned, the AdaSSP (a variant of the sufficient statistic perturbation approach) \cite{wang2018revisiting} has stood out as a method obtaining both optimal theoretical bounds and strong empirical performance --- both under the assumption that the magnitude of the data is known. 

\cite{amin2022easy} have previously noted that AdaSSP can perform poorly when the data magnitude is unknown and clipping bounds must be chosen in data-independent ways. They also give a method --- TukeyEM \cite{amin2022easy} --- aiming to remove these problematic hyperparameters for linear regression. TukeyEM privately aggregates multiple non-private linear regressors learned on disjoint subsets of the training set. The private aggregate uses the approximate Tukey depth and removes the risk of potential privacy leaks in choosing hyperparameters. However, because each model is trained on a different partition of the data, 
as \cite{amin2022easy} note,  TukeyEM performs well when the number of samples is roughly $1,000$ times larger than the dimension of the data. We include a comparison to both TukeyEM and AdaSSP in our experimental results. 

Another line of work has studied differentially private gradient boosting methods, generally using a weak learner class of classification and regression trees (CARTs) 
\cite{li2020dpboost,grislain2021dpxgb}. \cite{nori2021dpebm} gives a particularly effective variant called DP-EBM, which we compare to in our experiments.

There is a line of work that aims to privately optimize hyperparameters (e.g. \cite{hyperparam2,hyperparam1,hyperparam3}) --- we do not directly compare to these approaches, but our experiments show that our algorithm dominates comparison methods even when their hyperparameters are optimized non-privately.

\section{Preliminaries}

We study the standard squared error linear regression problem. Given a joint distribution $\mathcal{D}$ over $p$ dimensional features $x \in \mathbb{R}^p$ and real-valued labels $y \in \mathbb{R}$. Our goal is to learn a parameter vector 
$\theta \in \mathbb{R}^p$ to minimize squared error:
\begin{align}
\mathcal{L}(\theta,\mathcal{D}) =  \mathbb{E}_{(x,y) \sim \mathcal{D}}[(\langle \theta, x \rangle - y)^2].
\end{align}
In order to protect privacy of individuals in the training data when the learnt parameter vector $\theta$ is released, 
we adopt the notion of Differential Privacy. 

\subsection{Differential Privacy (DP)}\label{s.dp}
Differential privacy 
is a strong formal notion of individual privacy. 
DP ensures that, for a randomized algorithm, when two neighboring datasets that differ in one data point are presented, the two outputs are indistinguishable, within some probability margin defined using $\epsilon$ and $\delta \in [0,1)$.
\begin{definition}[Differential Privacy \cite{DMNS06}]
\label{def.dp}
A randomized algorithm $\mathcal{M}$ with domain $\mathcal{D}$ is $(\epsilon, \delta)$-differentially private for all $\mathcal{S} \subseteq \mbox{Range}(\mathcal{M})$ and for all pairs of neighboring databases $D,D' \in \mathcal{D}$,
\begin{align}
\Pr[\mathcal{M}(D) \in \mathcal{S} ] \leq e^\epsilon
\Pr[\mathcal{M}(D') \in \mathcal{S} ] + \delta,
\end{align}
where the probability space is over the randomness of the mechanism $\mathcal{M}$.
\end{definition}
A refinement of differential privacy, a single-parameter privacy definition (Gaussian differential privacy, GDP) was later proposed \cite{dong2021gaussian}. In this work, we use GDP in order to achieve better privacy bounds. We present several key results in \cite{dong2021gaussian} that we use in our privacy analysis.
\begin{definition}[$\ell_2$-sensitivity]\label{d.l2sens}
The $\ell_2$-sensitivity of a statistic $m$ over the domain of dataset $D$ is 
$\Delta(m)=\sup_{D,D^\prime}\|m(D)-m(D^\prime)\|_2$, 
where $\|\cdot\|_2$ is the vector $\ell_2$-norm, and the supremum is over all neighboring datasets.
\end{definition}
\begin{theorem}
[Gaussian Mechanism, Theorem 2.7 from of \cite{dong2021gaussian}]
\label{t.gm}
Define a randomized algorithm $GM$ that operates on a statistic $m$ as $GM(x, \mu)=m(x)+\eta$, where $\eta\sim \mathcal{N}(0, \Delta(m)^2/\mu^2)$ and $\Delta(m)$ is the $\ell_2$-sensitivity 
of the statistics $m$. Then, $GM$ is $\mu$-GDP. 
\end{theorem}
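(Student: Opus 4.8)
The statement to prove is that the Gaussian mechanism $GM$ with noise scale $\sigma = \Delta(m)/\mu$ satisfies $\mu$-GDP. The plan is to work directly from the hypothesis-testing definition of Gaussian differential privacy. Recall that a mechanism is $\mu$-GDP if, for every pair of neighboring datasets $D, D'$, the trade-off function $T(GM(D), GM(D'))$ between the two output distributions dominates pointwise the reference trade-off function $G_\mu := T(\mathcal{N}(0,1), \mathcal{N}(\mu,1))$, which captures the difficulty of distinguishing a standard normal from a unit-variance normal shifted by $\mu$. Here $T(P,Q)(\alpha)$ is the smallest achievable type-II error of any test of $P$ against $Q$ subject to type-I error at most $\alpha$. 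The entire task therefore reduces to computing $T(GM(D), GM(D'))$ and comparing it to $G_\mu$.

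First I would identify the two output distributions: writing $\mu_0 = m(D)$ and $\mu_1 = m(D')$, they are the multivariate Gaussians $\mathcal{N}(\mu_0, \sigma^2 I)$ and $\mathcal{N}(\mu_1, \sigma^2 I)$, which share an isotropic covariance and differ only in their means. The central step is to reduce this multivariate testing problem to a one-dimensional one. By the Neyman--Pearson lemma, the optimal (trade-off-achieving) tests are thresholds of the likelihood ratio, which for two equal-covariance Gaussians is a monotone function of the linear statistic $\langle x, \mu_1 - \mu_0\rangle$. Under each hypothesis this statistic is univariate Gaussian with common variance $\sigma^2 \|\mu_1 - \mu_0\|^2$ and means differing by $\|\mu_1 - \mu_0\|^2$; standardizing, the effective separation is exactly $\|\mu_1 - \mu_0\|/\sigma$. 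Hence the family of Neyman--Pearson tests traces out precisely the trade-off curve of the univariate problem, giving $T(GM(D), GM(D')) = G_{\|\mu_1-\mu_0\|/\sigma}$.

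The proof then concludes by monotonicity of the reference family. The trade-off functions $G_\eta$ are nonincreasing in $\eta$ --- a larger mean separation makes the hypotheses easier to distinguish, lowering the achievable type-II error --- so $G_{\eta} \ge G_{\eta'}$ pointwise whenever $\eta \le \eta'$. By the definition of $\ell_2$-sensitivity (Definition \ref{d.l2sens}) we have $\|\mu_1 - \mu_0\| = \|m(D) - m(D')\| \le \Delta(m)$, and since $\sigma = \Delta(m)/\mu$ this yields $\|\mu_1-\mu_0\|/\sigma \le \mu$. Therefore $T(GM(D), GM(D')) = G_{\|\mu_1-\mu_0\|/\sigma} \ge G_\mu$ for every neighboring pair, which is exactly the definition of $\mu$-GDP.

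I expect the main obstacle to be the dimension-reduction step: rigorously justifying that the trade-off curve of the $p$-dimensional Gaussian pair coincides with that of a single well-chosen univariate pair. This requires invoking Neyman--Pearson to argue that nothing is lost by projecting onto the direction $\mu_1 - \mu_0$ (i.e.\ that the likelihood ratio is measurable with respect to this projection), and then verifying that as the threshold sweeps over all levels the resulting type-I/type-II pairs fill out the entire curve $G_{\|\mu_1-\mu_0\|/\sigma}$ rather than a subset of it. The remaining monotonicity and sensitivity bounds are routine once this reduction is in hand.
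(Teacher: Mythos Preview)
Your argument is correct and is essentially the standard proof from \cite{dong2021gaussian}: reduce the equal-covariance Gaussian pair to a one-dimensional problem via Neyman--Pearson, identify the resulting trade-off function as $G_{\|m(D)-m(D')\|/\sigma}$, and conclude by monotonicity of $G_\eta$ in $\eta$ together with the sensitivity bound. Note, however, that the paper does not actually prove this statement --- it is quoted as a preliminary result (Theorem 2.7 of \cite{dong2021gaussian}) and used as a black box --- so there is no ``paper's own proof'' to compare against; your sketch simply reproduces the original argument from the cited source.
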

For $n$ GDP mechanisms with privacy parameters $\mu_1, \cdots, \mu_n$ ,the following composition theorem holds:
\begin{corollary}[Composition of GDP, Corollary 3.3 of \cite{dong2021gaussian}]
\label{c.composition}
The $n$-fold composition of $\mu_i$-GDP mechanisms is $\sqrt{\mu_1^2 + \cdots + \mu_n^2}$-GDP.
\end{corollary}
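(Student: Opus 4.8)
The statement is Corollary 3.3 of \cite{dong2021gaussian}, so the natural proof routes through the trade-off function machinery underlying Gaussian differential privacy. The plan is to work with the functional formulation of GDP. For two probability distributions $P,Q$ on a common space, let $T(P,Q):[0,1]\to[0,1]$ denote the trade-off function, where $T(P,Q)(\alpha)$ is the smallest type-II error achievable by any test of $P$ against $Q$ subject to type-I error at most $\alpha$. A mechanism $\mathcal{M}$ is $\mu$-GDP precisely when $T(\mathcal{M}(D),\mathcal{M}(D'))(\alpha)\ge G_\mu(\alpha)$ for all neighboring $D,D'$ and all $\alpha$, where $G_\mu:=T(\mathcal{N}(0,1),\mathcal{N}(\mu,1))$. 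With this in hand I would (i) invoke the general composition theorem for $f$-DP, and (ii) reduce the resulting composite trade-off function to a single Gaussian one, with an induction on $n$ reducing everything to the case $n=2$.

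For step (i), the key structural fact is that $f$-DP composes via a tensor product on trade-off functions: if $\mathcal{M}_1$ is $f$-DP and $\mathcal{M}_2$ is $g$-DP, then their composition is $(f\otimes g)$-DP, where the tensor product is defined on representing distributions by $T(P_1,Q_1)\otimes T(P_2,Q_2):=T(P_1\times P_2,\,Q_1\times Q_2)$. The nontrivial content here, which I would cite from \cite{dong2021gaussian}, is that this operation is well defined — it depends only on the trade-off functions $f,g$, not on the particular distributions representing them — and that it indeed governs the privacy of the composition.

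For step (ii), the crux is the identity $G_{\mu_1}\otimes G_{\mu_2}=G_{\sqrt{\mu_1^2+\mu_2^2}}$. Representing each factor by one-dimensional Gaussians, the left-hand side equals $T(\mathcal{N}(0,I_2),\mathcal{N}(v,I_2))$ with mean-shift vector $v=(\mu_1,\mu_2)$. By the Neyman--Pearson lemma the optimal tests are likelihood-ratio tests, and the log-likelihood ratio between these two Gaussians depends on the data only through its projection onto $v$; equivalently, choosing an orthogonal transformation that maps $v$ to $(\|v\|_2,0)$ and using the rotational invariance of the standard Gaussian reduces the two-dimensional testing problem to the one-dimensional problem of distinguishing $\mathcal{N}(0,1)$ from $\mathcal{N}(\|v\|_2,1)$, with the orthogonal coordinate carrying no information. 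Hence the composite trade-off function is exactly $G_{\|v\|_2}=G_{\sqrt{\mu_1^2+\mu_2^2}}$.

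Combining the two steps settles $n=2$, and induction extends it, since $\sqrt{(\sqrt{\mu_1^2+\cdots+\mu_{n-1}^2})^2+\mu_n^2}=\sqrt{\mu_1^2+\cdots+\mu_n^2}$. The main obstacle is really step (i): establishing that the tensor product is a well-defined operation on trade-off functions and that it governs composition, which is the substantive analytic work in \cite{dong2021gaussian}. By contrast, the Gaussian identity in step (ii) is clean, resting only on Neyman--Pearson and rotational symmetry.
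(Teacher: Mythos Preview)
The paper does not prove this statement; it is stated as a preliminary and attributed directly to Corollary~3.3 of \cite{dong2021gaussian}. Your sketch is a faithful outline of the argument in that source --- composition of $f$-DP via the tensor product of trade-off functions, followed by the rotational-invariance reduction $G_{\mu_1}\otimes G_{\mu_2}=G_{\sqrt{\mu_1^2+\mu_2^2}}$ --- so there is nothing to compare against here.
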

There is a tight relationship between $\mu$-GDP and $(\epsilon, \delta)$-DP that allows us to perform our analysis using GDP, and state our results in terms of $(\epsilon,\delta)$-DP. 
\begin{corollary}[Conversion between GDP and DP, Corollary 2.13 of \cite{dong2021gaussian}]
\label{t.conversion}
A mechanism is $\mu$-GDP if and only if it is $(\epsilon, \delta(\epsilon))$-DP for all $\epsilon \geq 0$, such that
\begin{align}
\delta(\epsilon) = \Phi \left( -\frac{\epsilon}{\mu} +\frac{\mu}{2} \right) - e^{\epsilon} \Phi \left(-\frac{\epsilon}{\mu} -\frac{\mu}{2} \right)
\end{align}
where $\Phi$ denotes the standard Gaussian CDF.
\end{corollary}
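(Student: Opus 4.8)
The plan is to use the hypothesis-testing (trade-off function) characterization of privacy that underlies the definition of $\mu$-GDP, and then reduce the statement to a tight computation of the hockey-stick divergence between the canonical pair of Gaussians. Recall that $\mu$-GDP asserts that for every pair of neighboring databases $D, D'$, distinguishing $\mathcal{M}(D)$ from $\mathcal{M}(D')$ is no easier than distinguishing $N(0,1)$ from $N(\mu,1)$; equivalently, the type-I/type-II error trade-off curve of the optimal test dominates $G_\mu(\alpha) = \Phi(\Phi^{-1}(1-\alpha) - \mu)$. Dually, $(\epsilon,\delta)$-DP is equivalent to this same trade-off curve lying above the piecewise-affine function $f_{\epsilon,\delta}(\alpha) = \max\{0,\, 1 - \delta - e^{\epsilon}\alpha,\, e^{-\epsilon}(1 - \delta - \alpha)\}$. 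With these two facts in hand, the corollary reduces to a purely analytic question about the single convex curve $G_\mu$: for each $\epsilon$, what is the smallest $\delta$ so that $G_\mu \ge f_{\epsilon,\delta}$?

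For the forward direction ($\mu$-GDP $\Rightarrow$ $(\epsilon, \delta(\epsilon))$-DP), I would note that the tight $\delta$ for a fixed $\epsilon$ equals the hockey-stick divergence $\delta(\epsilon) = \sup_{S}\big(Q(S) - e^{\epsilon}P(S)\big)$ between $P = N(0,1)$ and $Q = N(\mu,1)$. The supremum is attained by the likelihood-ratio rejection region $S = \{x : q(x)/p(x) > e^{\epsilon}\}$. Since $q(x)/p(x) = \exp(\mu x - \mu^2/2)$, this region is the half-line $\{x > \epsilon/\mu + \mu/2\}$. Evaluating the two Gaussian tail probabilities at the threshold $c = \epsilon/\mu + \mu/2$ gives $Q(\{x>c\}) = \Phi(\mu - c) = \Phi(-\epsilon/\mu + \mu/2)$ and $P(\{x>c\}) = \Phi(-c) = \Phi(-\epsilon/\mu - \mu/2)$, so that $\delta(\epsilon) = \Phi(-\epsilon/\mu + \mu/2) - e^{\epsilon}\Phi(-\epsilon/\mu - \mu/2)$, which is exactly the claimed formula. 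Because $\mu$-GDP forces every trade-off curve to dominate $G_\mu$, and $G_\mu$ itself satisfies this $(\epsilon, \delta(\epsilon))$ bound for every $\epsilon$, the mechanism is $(\epsilon, \delta(\epsilon))$-DP for all $\epsilon \ge 0$.

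The reverse direction — that satisfying all the $(\epsilon, \delta(\epsilon))$ constraints simultaneously actually implies $\mu$-GDP — is the step I expect to be the main obstacle, because it requires showing that a family of linear lower bounds reconstructs the entire curve rather than merely touching it at isolated points. The approach is convex duality: each $f_{\epsilon, \delta(\epsilon)}$ is a piecewise-affine lower bound on the convex function $G_\mu$, and by construction (the rejection region above is precisely where equality holds) its active segment $\alpha \mapsto 1 - \delta(\epsilon) - e^{\epsilon}\alpha$ is a supporting line tangent to $G_\mu$ at the point determined by the threshold $c = \epsilon/\mu + \mu/2$. As $\epsilon$ ranges over $[0,\infty)$ the tangency point sweeps out the whole curve, so the upper envelope of these supporting lines equals $G_\mu$; equivalently, one can phrase this via the convex conjugate, verifying that $\delta(\epsilon) = 1 + G_\mu^{*}(-e^{\epsilon})$ and invoking biconjugation ($G_\mu^{**} = G_\mu$, valid since $G_\mu$ is closed and convex). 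Hence any trade-off curve dominating every $f_{\epsilon, \delta(\epsilon)}$ must dominate their envelope $G_\mu$, which is precisely $\mu$-GDP. The remaining care is to confirm the regularity facts used along the way — that $G_\mu$ is convex, continuous, and symmetric (so that the two directions of the DP inequality yield the same $\delta$) — all of which follow from the standard properties of trade-off functions and of the Gaussian location family.
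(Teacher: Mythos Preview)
The paper does not prove this statement at all; it is quoted as Corollary~2.13 of \cite{dong2021gaussian} and used as a black box in the privacy accounting for Theorem~\ref{t.boostedadassp_gdp}. Your sketch is correct and is essentially the argument from the original source: the forward direction is the tight hockey-stick divergence computation for $\mathcal{N}(0,1)$ versus $\mathcal{N}(\mu,1)$, and the reverse direction is the convex-duality observation that the family $\{f_{\epsilon,\delta(\epsilon)}\}_{\epsilon\ge 0}$ furnishes all supporting lines of $G_\mu$ (using symmetry of $G_\mu$ to cover slopes on both sides of $-1$), so their upper envelope recovers $G_\mu$ by biconjugation.
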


\section{Improved AdaSSP via Gradient Boosting}
Our algorithm for private linear regression uses gradient boosting with AdaSSP as a weak learner.
\subsection{Gradient Boosting}
\label{ss.loss}
For regression tasks, we assume that we have a dataset $D=\{x_i,y_i\}_{i=1}^n$, where $x_i\in\mathbb{R}^p$ and $y_i\in\mathbb{R}$, $\forall i \in [n]$. 
Let  $T$ be the number of boosting rounds, and $f_t$ be the model obtained at iteration $t\in[T]$. 
Since our base learner is linear and the objective is the squared loss, at the $t$-th round, the objective of a gradient boosting algorithm is to obtain:
\begin{align}
    \theta_{t}= \arg\min_{\theta}\sum_{i=1}^n (y_i - (\sum_{k=1}^{t-1}\theta_k^\top x_i + \theta^\top x_i))^2
                = \arg\min_{\theta}\sum_{i=1}^n\left(g_{i,t} - \theta^\top x_i\right)^2 \label{eq.objective_round},
\end{align}
where $g_{i,t}=y_i - \sum_{k=1}^{t-1}\theta_k^\top x_i$ is the steepest gradient of the objective function w.r.t. the ensemble predictions made by previous rounds.
Therefore, each gradient boosting round is  solving a squared error linear regression problem where the features are data, and the labels are gradients. The model update at $t$-th round is simply $\hat{\theta} = \hat{\theta} + \theta_{t}$, and the final model is $\hat{\theta}=\sum_{k=1}^T\theta_k$.

Since the update preserves the linearity of the model, and squared error regression can be solved optimally over linear models. Absent privacy, gradient boosting cannot improve the error of linear regression in the standard setting. Nevertheless, when we replace exact linear regression with differentially private approximations, the situation changes.

\subsection{Private Ridge Regression as a Base Learner}
\label{ss.base}

Let $X \in \mathbb{R}^{n \times p}$ be the matrix with $x_i$'s in each row and $g_t \in \mathbb{R}^n$ be a vector containing gradients of training samples at $t$ (i.e., $g_{i,t}$).
Absent privacy, there exists a closed-form soluion to Eq. \ref{eq.objective_round}, and it is
\begin{align}
 \theta_t= (X^\top X)^{-1} X^\top g_t ,
\end{align}

To provide differential privacy guarantees, 
AdaSSP (Algorithm 2 of \cite{wang2018revisiting}) is applied to learn a private linear model at each round. It also requires us to adjust our solution at each round from OLS to Ridge Regression as follows:
\begin{align}
 \theta_t= (X^\top X+\lambda I)^{-1} X^\top g_t ,
\end{align}
where $\lambda$ controls the strength of regularization, and $I\in\mathbb{R}^{p\times p}$ is the identity matrix.

Let $\mathcal{X}$ and $\mathcal{Y}$ be the domain of our features and labels, respectively. We define  bounds on the data domain $||\mathcal{X}|| = \sup_{x\in\mathcal{X}} ||x||$ and $||\mathcal{Y}|| = \sup_{y\in\mathcal{Y}} ||y||$.
Given as input privacy parameters $\epsilon$ and $\delta$, and bounds on the data scale $||\mathcal{X}||$ and $||\mathcal{Y}||$ for $x_i$ and $g_{i,t}$, AdaSSP chooses a noise scale  to obtain $\mu$-GDP for the appropriate value of $\mu$, and adds calibrated Gaussian noise to three sufficient statistics: 1) $X^\top X$, 2) $X^\top g_t$, and 3) $\lambda$. The adaptive aspect of AdaSSP comes from the fact that $\lambda$ is chosen based on $X^\top X$, therefore, we also need to allocate privacy budget for computing  $\hat{\lambda}$.
Details of the AdaSSP algorithm for learning one ridge regressor are deferred to Appendix  \ref{a.adasspalgo}.

Let 
$\widehat{X^\top X} = GM(X^\top X, \mu_1)$, 
$\widehat{X^\top g_t} = GM(X^\top g_t, \mu_2)$,
$\widehat{\lambda} = GM(\lambda_{\min} (X^\top X), \mu_3)$ 
be the private release of sufficient statistics from a single instantiation of AdaSSP to learn $\theta_t$ and $GM$ as defined in Theorem \ref{t.gm}.
The final model $\widehat{\theta}$ can be expressed as
\begin{align}
   \widehat{\theta} =  \textstyle \sum_{t=1}^T \widehat{\theta_t} = \left(\widehat{X^\top X} +\widehat{\lambda} I \right)^{-1} \sum_{t=1}^T \widehat{X^\top g_t}
\end{align}
Therefore, when running gradient boosting, we only need to release $GM(X^\top X,\mu_1)$ and $GM(\lambda_{\min}(X^\top X), \mu_3)$ once at the beginning of our algorithm, and at each stage, the only additional information we need to  release is $GM(X^\top g_t, \mu_2/\sqrt{T})$; this provides a savings over  naively repeating AdaSSP (given as Algorithm \ref{a.adassp} in the Appendix) for $T$ rounds.

Putting it all together, our final algorithm BoostedAdaSSP  is shown in Algorithm \ref{a.dpgb_linear}, and the privacy guarantee is shown in Theorem \ref{t.boostedadassp_gdp}.
\begin{theorem}
\label{t.boostedadassp_gdp}

Algorithm \ref*{a.dpgb_linear} satisfies $(\epsilon, \delta)$-DP.
\end{theorem}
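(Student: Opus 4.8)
The plan is to decompose the entire BoostedAdaSSP mechanism into a collection of elementary Gaussian-mechanism releases, track the GDP parameter of each, compose them using Corollary~\ref{c.composition}, and convert the resulting GDP guarantee into an $(\epsilon,\delta)$-DP guarantee via Corollary~\ref{t.conversion}. First I would enumerate every access the algorithm makes to the private data: the single release $\widehat{X^\top X} = GM(X^\top X, \mu_1)$, the single release $\widehat{\lambda} = GM(\lambda_{\min}(X^\top X), \mu_3)$, and the $T$ per-round releases $\widehat{X^\top g_t} = GM(X^\top g_t, \mu_2/\sqrt{T})$ for $t \in [T]$. Since every output of the algorithm --- each $\widehat{\theta_t}$ and the final $\widehat{\theta}$ --- is a deterministic post-processing of this tuple of noisy statistics, by the post-processing invariance of DP it suffices to argue that the tuple itself is private.

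Next, I would verify that each individual release satisfies its claimed GDP parameter using Theorem~\ref{t.gm}. This reduces to bounding the $\ell_2$-sensitivity of each statistic in terms of the data-domain bounds $\|\mathcal{X}\|$ and $\|\mathcal{Y}\|$: the sensitivities of $X^\top X$ and of $\lambda_{\min}(X^\top X)$ are controlled by $\|\mathcal{X}\|^2$, while the sensitivity of $X^\top g_t$ is controlled by $\|\mathcal{X}\|\,\|\mathcal{Y}\|$ once the residuals/gradients are clipped to the label domain. Because AdaSSP (Algorithm~\ref{a.adassp}) calibrates its Gaussian noise exactly to these sensitivities, each release with parameter $\mu_i$ is $\mu_i$-GDP.

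The crux of the argument is composition. Because the gradients $g_{i,t}$ at round $t$ depend on the previously released noisy statistics, the $T$ per-round mechanisms form an \emph{adaptive} sequence; I would appeal to the fact that the GDP composition of Corollary~\ref{c.composition} holds for adaptive composition, noting that the sensitivity bound $\|\mathcal{X}\|\,\|\mathcal{Y}\|$ is data-independent given clipping and therefore remains valid at every round regardless of the history. Composing the $T$ copies of $(\mu_2/\sqrt{T})$-GDP together with the two one-time releases then yields
\begin{align}
\sqrt{\mu_1^2 + \mu_3^2 + \sum_{t=1}^{T}\left(\tfrac{\mu_2}{\sqrt{T}}\right)^2} = \sqrt{\mu_1^2 + \mu_2^2 + \mu_3^2}\,,
\end{align}
so the overall mechanism is $\sqrt{\mu_1^2 + \mu_2^2 + \mu_3^2}$-GDP. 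The $1/\sqrt{T}$ scaling makes this bound independent of the number of boosting rounds, which is precisely the saving claimed in the text.

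Finally, I would close the argument by choosing the per-statistic budgets so that $\mu_1^2 + \mu_2^2 + \mu_3^2 = \mu^2$ for the value of $\mu$ that the algorithm derives from the target $(\epsilon,\delta)$, and then invoke Corollary~\ref{t.conversion} to conclude that $\mu$-GDP is equivalent to $(\epsilon,\delta(\epsilon))$-DP; since $\mu$ is selected so that $\delta(\epsilon) \le \delta$, Algorithm~\ref{a.dpgb_linear} is $(\epsilon,\delta)$-DP. The main obstacle I anticipate is not any single calculation but the bookkeeping around the adaptive gradient releases: one must verify that the sensitivity of $X^\top g_t$ does not grow across rounds (ensured by clipping the residuals to $\|\mathcal{Y}\|$), and that $X^\top X$ and $\lambda_{\min}(X^\top X)$ are genuinely released only once rather than recomputed each round, since the naive alternative would incur an extra $\sqrt{T}$ factor in the composed parameter.
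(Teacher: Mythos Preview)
Your proposal is correct and follows exactly the same route as the paper's proof: apply the Gaussian mechanism guarantee to each release, compose via Corollary~\ref{c.composition} to get $\sqrt{\mu_1^2 + T(\mu_2/\sqrt{T})^2 + \mu_3^2} = \mu$, and convert to $(\epsilon,\delta)$-DP via Corollary~\ref{t.conversion}. The paper's proof is a terse two-sentence version of this, whereas you have spelled out the sensitivity bounds, post-processing, and the adaptive nature of the per-round releases --- useful bookkeeping that the paper leaves implicit.
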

\begin{proof}
We use the privacy of the Gaussian mechanism, and the composition theorem stated in corollary \ref{c.composition}, which gives us a GDP bound of: 
$\sqrt{\mu_1^2 + T \left( \mu_2/\sqrt{T}\right)^2 + \mu_3^2 }
= \sqrt{\mu_1^2 + \mu_2^2 + \mu_3^2 }
= \mu$. The conversion from GDP to DP follows from Corollary \ref{t.conversion}.
\end{proof}
\begin{algorithm}[t]
\caption{BoostedAdaSSP}
\label{a.dpgb_linear}
\begin{algorithmic}

\State \textbf{Input} Dataset $D=\{X,y\}$, privacy parameters $\epsilon$, $\delta$, split ratio $a, b, c$, and clipping threshold $\tau$ 
\State \textbf{Initialize} $\theta = 0$ 
\State Find $\mu$ such that $\mu$-GDP satisfies  $(\epsilon, \delta)$-DP.
\State Calibrate $\mu_1, \mu_2, \mu_3$ such that $\mu_1: \mu_2: \mu_3 = a:b:c$ and $\mu = \sqrt{\mu_1^2 + \mu_2^2 + \mu_3^2}$
\State Clip the norm of samples $x_i$ = clip($x_i$, 1), $\forall i\in[n]$ 

\State Private release of $\widehat{\lambda} = GM(\lambda_{\min}(X^\top X), \mu_3)$ and $\widehat{X^\top X} = GM(X^\top X, \mu_1)$ 
\State Compute $\Gamma = (\widehat{X^\top X} +\widehat{\lambda} I)^{-1} $
\For{$t \in [T]$}

    \State $g_t = y - X\theta$ \texttt{\small \# negative gradient}
    \State $g_{i,t} = \text{clip}(g_{i,t}, \tau)$, $\forall i\in[n]$ \texttt{\small \# gradient clipping}
    \State $\widehat{X^\top g_t}=GM(X^\top g_t, \frac{\mu_2}{\sqrt{T}})$ \texttt{\small \# private release}
    \State $\theta_t =  \Gamma \widehat{X^\top g_t}\;$ \texttt{\small \# private linear model}
    \State $\;\theta \leftarrow \theta + \theta_t$ \texttt{\small \# model update}
\EndFor
\State \textbf{Return} $\theta$
\State * $\text{clip}(x, \tau) = x\min(1,\tau/||x||)$
\State * $GM(X, \mu)$ denotes the Gaussian mechanism that satisfies $\mu$-GDP for releasing noisy $X$ 

\end{algorithmic}
\end{algorithm}

\subsection{Data-independent Clipping Bounds}
As described in \cite{wang2018revisiting} and mentioned in \cite{amin2022easy}, the clipping bounds on $\mathcal{X}$ and $\mathcal{Y}$ are taken to be known --- but if they are selected as a deterministic function of the data, this would constitute a violation of differential privacy. 
For $\mathcal{Y}$, the most natural solution is to use a data-independent $\tau$ to clip labels and enforce a bound of $\tau$; but as we observe both empirically and theoretically, this introduces a difficult-to-tune hyperparameter that can lead to a substantial degradation in performance. 
For $\mathcal{X}$, one way to resolve this issue, (as is done in the implementation of AdaSSP
\footnote{\href{https://github.com/yuxiangw/autodp/blob/master/\newline tutorials/tutorial_AdaSSP_vs_noisyGD.ipynb}{https://github.com/yuxiangw/autodp/blob/master/tutorials/tutorial\_AdaSSP\_vs\_noisyGD.ipynb}})
is to normalize each individual data point to have norm $1$, but this is not without loss of generality: it  fundamentally changes the nature of the regression problem being solved, and so does not always constitute a meaningful solution to the original problem. Instead, we clip the norm of data points so that the maximum norm doesn't exceed a fixed data independent threshold (but might be lower). 

\section{Experiments}

We selected 33 tabular datasets with single-target regression tasks from OpenML \footnote{\href{https://www.openml.org}{https://www.openml.org}} \cite{grinsztajn2022tree} for evaluating and comparing our algorithm to other algorithms.
Task details are presented in Table  \ref{app.regression_tasks} .
The selected tasks include both categorical and numerical features. We assume that the schema  of individual tables is public information, and so convert categorical features into one-hot encodings.

We compare our approach with a number of other algorithms. First, we compare to other private linear regression methods: AdaSSP,  DP Gradient Descent and TukeyEM. These represent  the leading practical methods (with accompanying code) used for solving linear regression problems. DP Gradient Descent solves the linear regression problem through noisy batch gradient descent with noise calibrated with clipped per-sample gradients, meanwhile, TukeyEM trains nonprivate linear models on disjoint subsets and privately aggregates the learned linear models. Since our algorithm is based on gradient boosting, in addition to algorithms that solve linear regression problems, we also compare to  DP-EBM \footnote{\href{https://github.com/interpretml/interpret}{https://github.com/interpretml/interpret}},
 the current state-of-the-art differentially private gradient boosting algorithm, which uses  trees as its base learners. Rather than finding the optimal splits for each leave based on the data, DP-EBM uses random splits, which significantly improves the efficacy of the privacy budget.

As each algorithm has it own hyperparameters (which are often tuned non-privately in reported results),  we present three sets of comparisons. 1) First, we compare performance of the algorithms when the hyperparameters are non-privately optimized for each dataset, for each of the algorithms. This provides an (unrealistically) optimistic view of each algorithm's best case performance. 2) Next, we use a fixed set of hyperparameters for our algorithm (BoostedAdaSSP), which remain unchanged from dataset to dataset, while still non-privately optimizing the hyperparameters of each of our comparison partners on a dataset-by-dataset basis. This provides an (unrealistic) best-case comparison for the methods we benchmark against. 3) Finally, we show what we view as the fair comparison, which is when the hyperparameters of our method (BoostedAdaSSP) as well as those of all of our comparison partners are held constant across all of the datasets.   For hyperparameter tuning, Optuna \cite{optuna_2019} is applied. The tuning ranges of hyperparameters, and the fixed hyperparameters for our method are reported in Table \ref{tab:hyperparams} in the Appendix. For each comparison partner, when we fix the parameters, we use parameters recommended in their papers.

\textbf{Gradient Boosting Improves AdaSSP.} When hyperparameters are non-privately tuned for both methods, then the mean squared error is quite similar on most datasets for both methods, but our method (BoostedAdaSSP) obtains lower error on the majority of datasets at all tested privacy levels. When BoostedAdaSSP uses fixed hyperparameters, it remains competitive with AdaSSP even when AdaSSP is non-privately tuned on each dataset. Finally, when both methods use fixed hyperparameters, BoostedAdaSSP has substantially improved error across a majority of datasets at all privacy levels. This indicates a substantial advantage for our method.
Comparisons are presented in Fig. \ref{fig:adassp}.

\begin{figure}[t]
    \centering
    \begin{subfigure}[b]{\textwidth}
        \includegraphics[width=\linewidth]{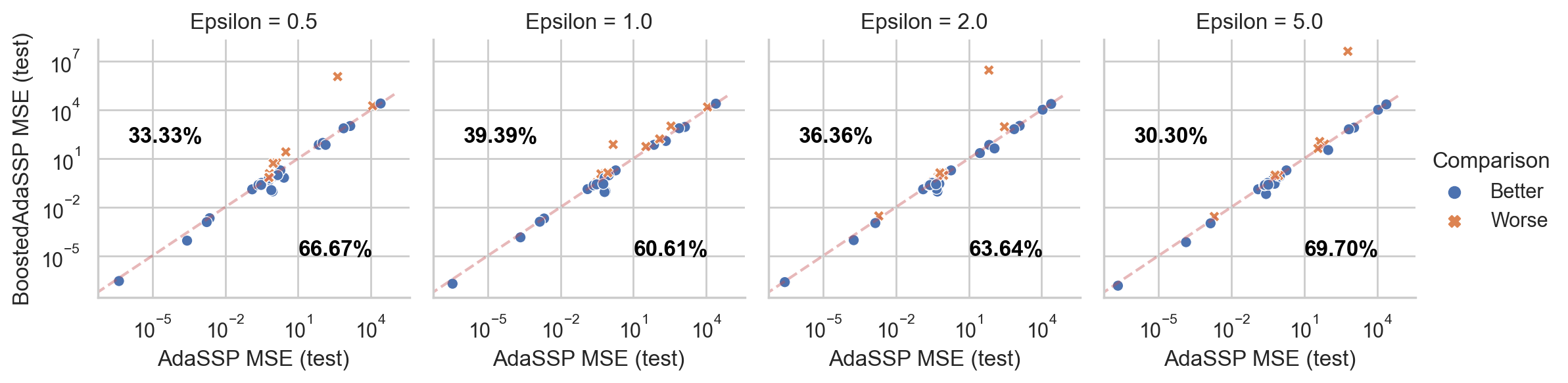}
        \caption{Non-privately Tuned BoostedAdaSSP vs. Non-privately Tuned AdaSSP}
        \label{fig:optvsopt_adassp}
    \end{subfigure}
    \begin{subfigure}[b]{\textwidth}
        \includegraphics[width=\linewidth]{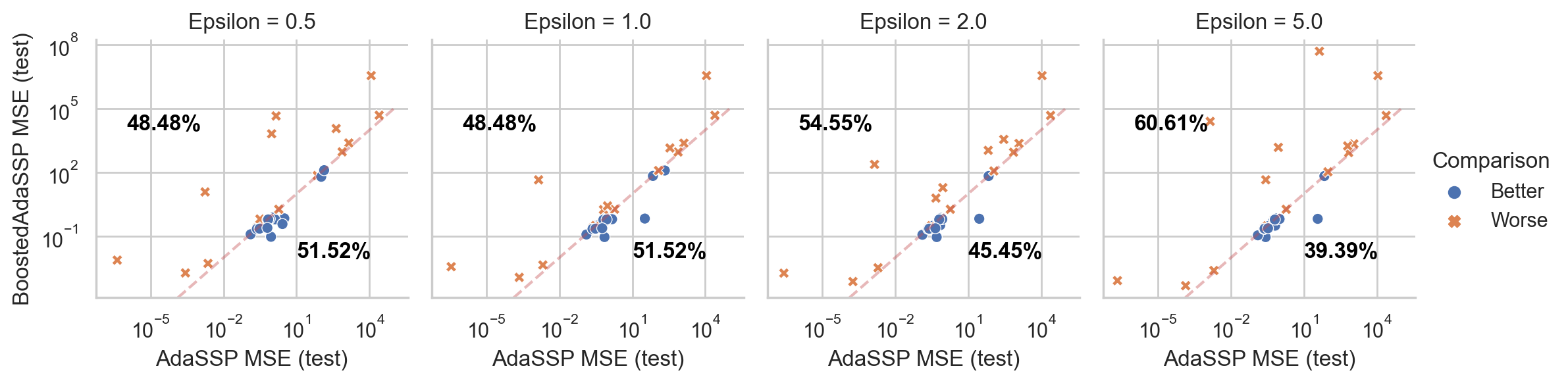}
        \caption{Fixed BoostedAdaSSP vs. Non-privately Tuned AdaSSP}
        \label{fig:fixedvsopt_adassp}
    \end{subfigure}
    \begin{subfigure}[b]{\textwidth}
        \includegraphics[width=\linewidth]{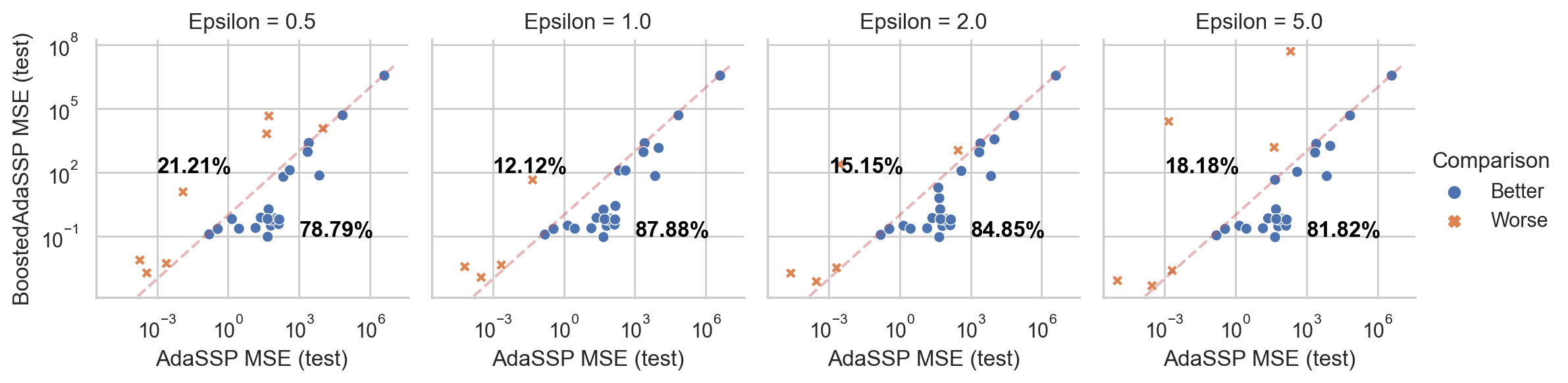}
        \caption{Fixed BoostedAdaSSP vs. Fixed AdaSSP}
        \label{fig:fixedvsfixed_adassp}
    \end{subfigure}    
    \caption{
    \textbf{BoostedAdaSSP vs. AdaSSP. }
    ``Non-privately Tuned'' indicates that hyperparameters of the algorithm are non-privately optimized on each dataset, and ``Fixed'' indicates that the hyperparameters are fixed and shared across all datasets. Each dataset is shown as a point on the plot, labeled with the error obtained by BoostedAdaSSP (y axis) and AdaSSP (x axis). Points below the diagonal are datasets on which BoostedAdaSSP improves over AdaSSP---the fractions of datasets lying above and below the diagonal are annotated. 
    }
    \label{fig:adassp}
\end{figure}

\textbf{BoostedAdaSSP outperforms DP-Gradient Descent}. Gradient descient and BoostedAdaSSP are similar iterative algorithms. But in all comparison settings (including when the hyper-parameters of gradient descent are non-privately optimized on individual datasets, and BoostedAdaSSP uses fixed hyperparameters across all datasets), BoostedAdaSSP substantially outperforms. BoostedAdaSSP can be viewed as gradient descent in function space rather than parameter space, and is able to take advantage of the optimized ridge regression estimator of AdaSSP at each step. 
 Results are in Fig. \ref{fig:dpgd}

\begin{figure}[t]
    \centering
    \begin{subfigure}[b]{\textwidth}
        \includegraphics[width=\linewidth]{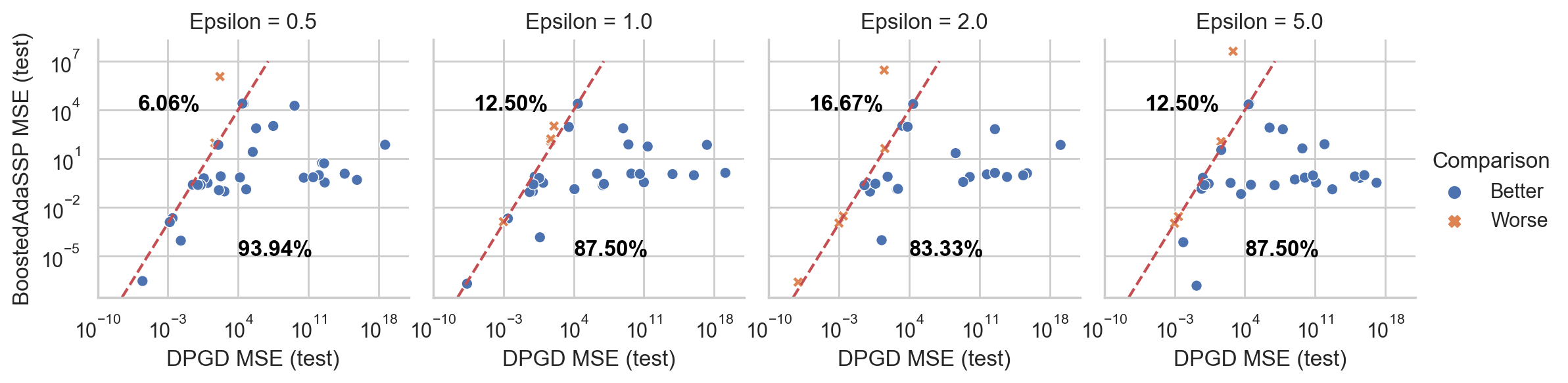}
        \caption{Non-privately Tuned BoostedAdaSSP vs. Non-privately Tuned DP Gradient Descent}
        \label{fig:optvsopt_dpgd}
    \end{subfigure}
    \begin{subfigure}[b]{\textwidth}
        \includegraphics[width=\linewidth]{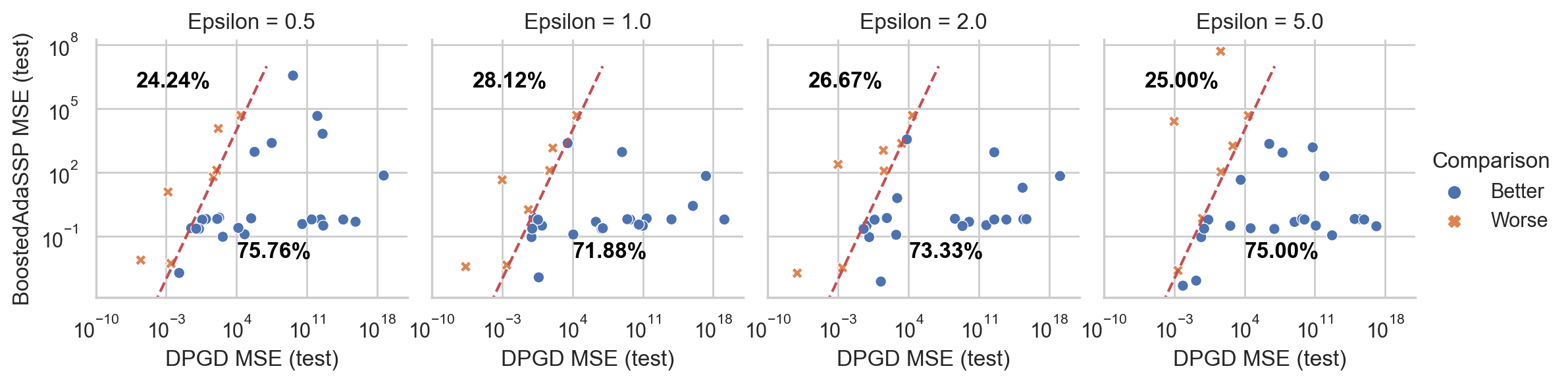}
        \caption{Fixed BoostedAdaSSP vs. Non-privately Tuned DP Gradient Descent}
        \label{fig:fixedvsopt_dpgd}
    \end{subfigure}    
    \begin{subfigure}[b]{\textwidth}
        \includegraphics[width=\linewidth]{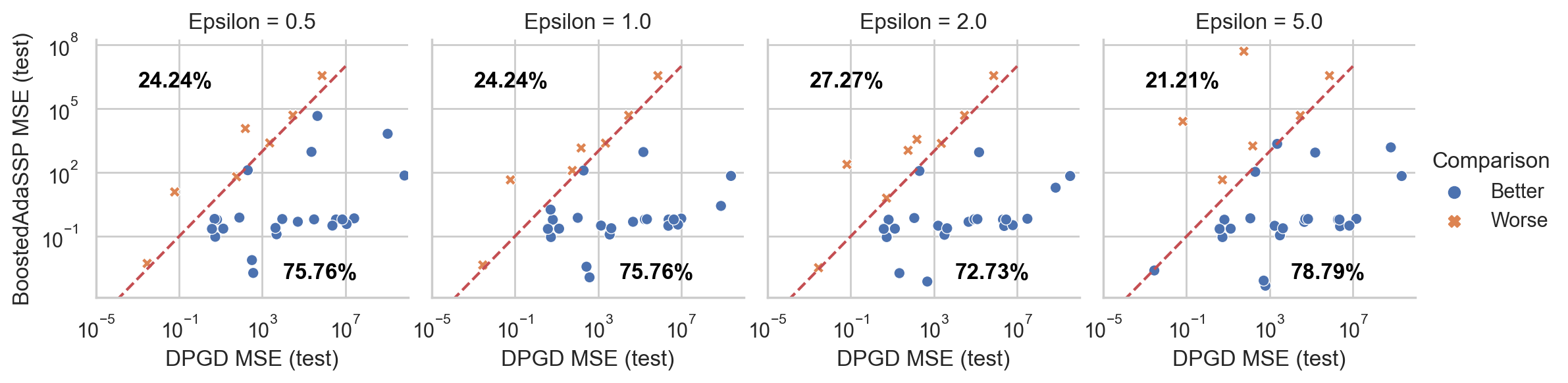}
        \caption{Fixed BoostedAdaSSP vs. Fixed DP Gradient Descent}
        \label{fig:fixedvsfixed_dpgd}
    \end{subfigure}    
    \caption{
    \textbf{BoostedAdaSSP vs. DP Gradient Descent}.
     BoostedAdaSSP outperforms DP Gradient Descent in all comparisons, even when our algorithm uses a fixed set of hyperparameters. 
    }
    \label{fig:dpgd}    
\end{figure}

\textbf{BoostedAdaSSP outperforms TukeyEM}.
BoostedAdaSSP also outperforms TukeyEM in all experimental regimes; we can see that the advantage that BoostedAdaSSP enjoys diminishes as the privacy parameter increases, since (when we optimize for the hyperparameters for both methods), both approach non-private (exact) linear regression. 
 TukeyEM has only one hyperparameter, but it requires a massive number of data samples to train, due to its subsample-and-aggregate nature, and it produces an all-zero parameter vector in many scenarios. In contrast, our BoostedAdaSSP has only a couple more hyperparameters, and a common selection for them works well on many datasets. Comparisons are shown in Fig. \ref{fig:tukeyem}.
\begin{figure}[t]
    \centering
    \begin{subfigure}[b]{\textwidth}
        \includegraphics[width=\linewidth]{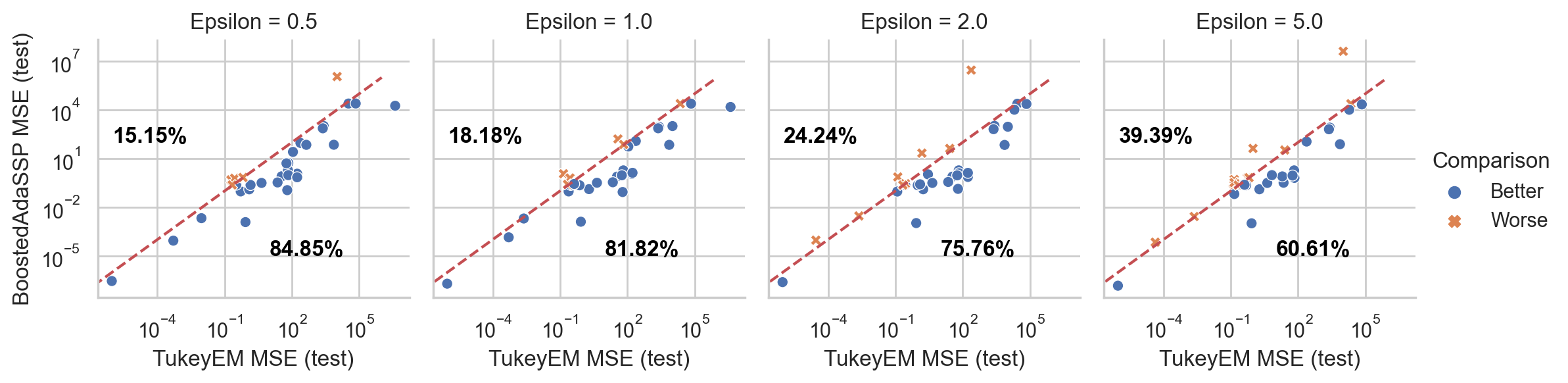}
        \caption{Non-privately Tuned BoostedAdaSSP vs. Non-privately Tuned TukeyEM}
        \label{fig:optvsopt_tukeyem}
    \end{subfigure}
    \begin{subfigure}[b]{\textwidth}
        \includegraphics[width=\linewidth]{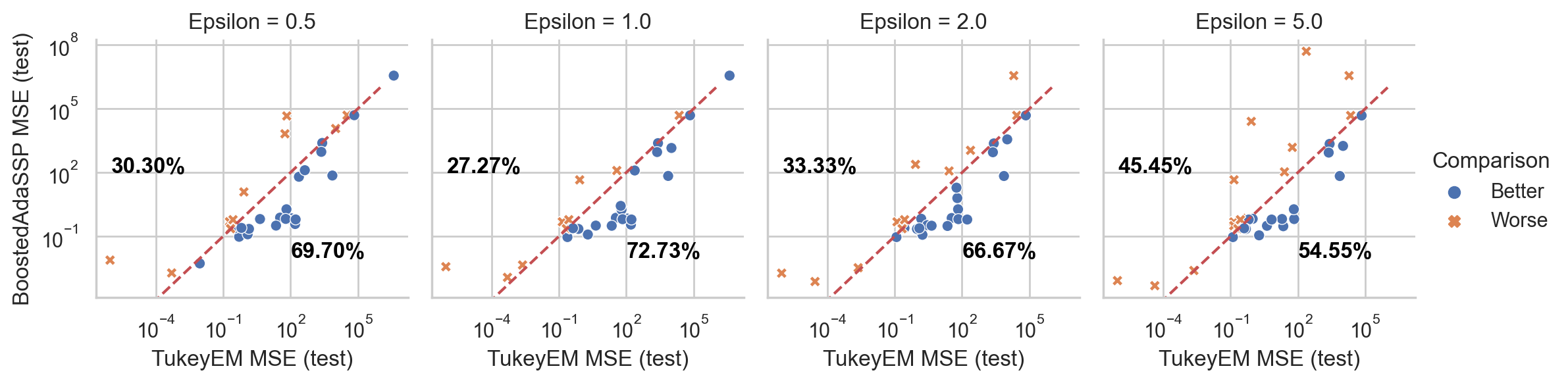}
        \caption{Fixed BoostedAdaSSP vs. Non-privately Tuned TukeyEM}
        \label{fig:fixedvsopt_tukeyem}
    \end{subfigure}
    \begin{subfigure}[b]{\textwidth}
        \includegraphics[width=\linewidth]{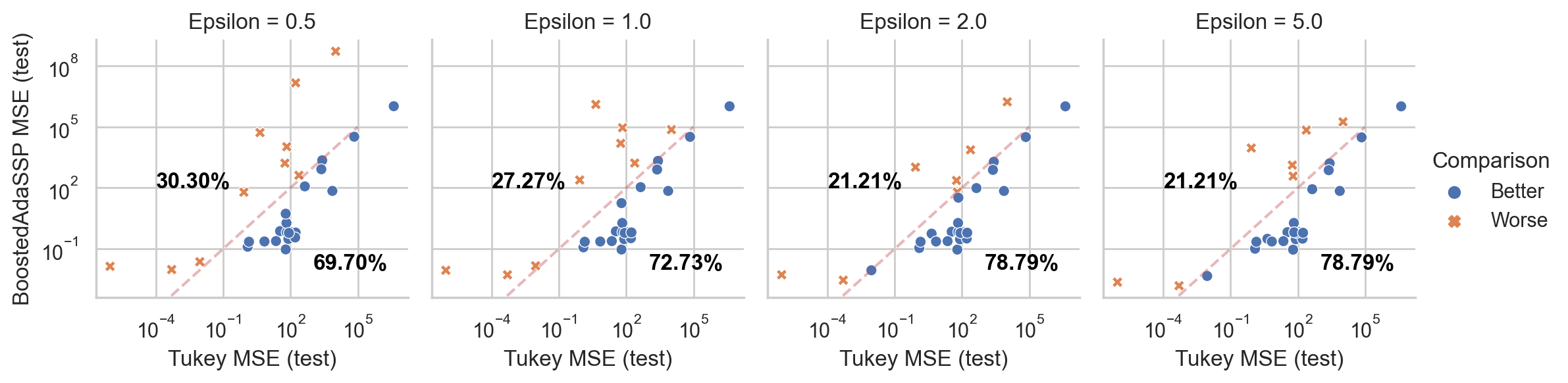}
        \caption{Fixed BoostedAdaSSP vs. Fixed TukeyEM}
        \label{fig:fixedvsfixed_tukeyem}
    \end{subfigure}    
    \caption{
    \textbf{BoostedAdaSSP vs. TukeyEM.}
     BoostedAdaSSP outperforms TukeyEM in all comparisons, even when our algorithm uses a fixed set of hyperparameters.  TukeyEM has the advantage of only having a single hyperparameter (number of models), however, in our experiments we find that there isn't a universally good selection for this hyperparameter. 
    }
    \label{fig:tukeyem}
\end{figure}

\textbf{With privacy, gradient boosting over linear models outperforms gradient boosting over tree based models.} Results in Fig. \ref{fig.DP-EBM} show that BoostedAdaSSP outperforms DB-EBM in all experimental regimes. DB-EBM is also a private gradient boosting algorithm, using tree based learners as base models. This is something that does not occur absent privacy (gradient boosting cannot improve on exact linear regression, as the update steps preserve linearity). This is emblematic of a more general message, that differential privacy rewards algorithmic simplicity (even when more complex algorithms outperform absent privacy constraints). This is because more complex algorithms require more noise addition for privacy, which is often ultimately not worth the tradeoff. 
\begin{figure}[t]
    \centering
    \begin{subfigure}[b]{\textwidth}
        \includegraphics[width=\linewidth]{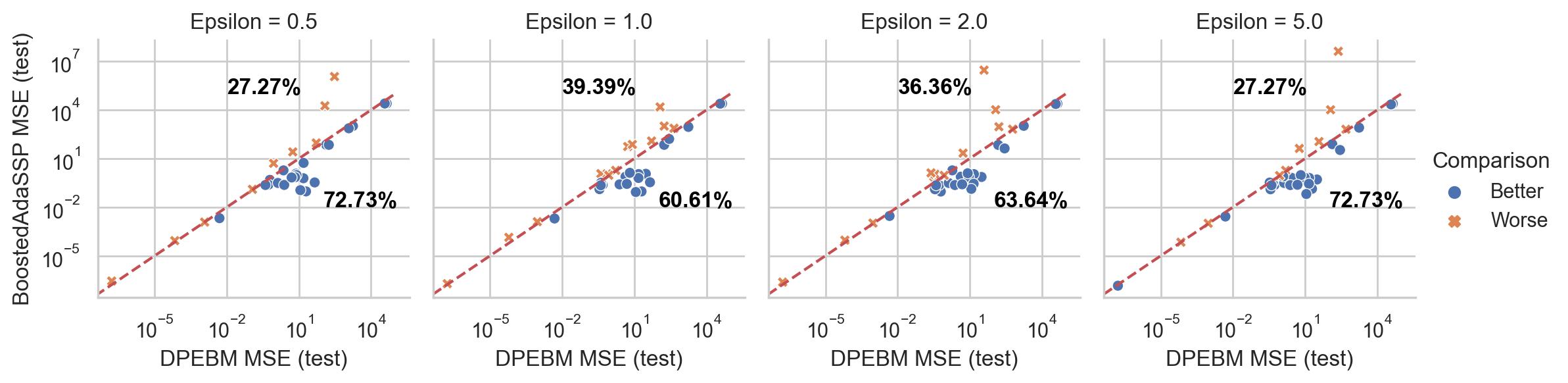}
        \caption{Non-privately Tuned BoostedAdaSSP vs. Non-privately Tuned DP-EBM}
        \label{fig:optvsopt_dpebm}
    \end{subfigure}
    \begin{subfigure}[b]{\textwidth}
        \includegraphics[width=\linewidth]{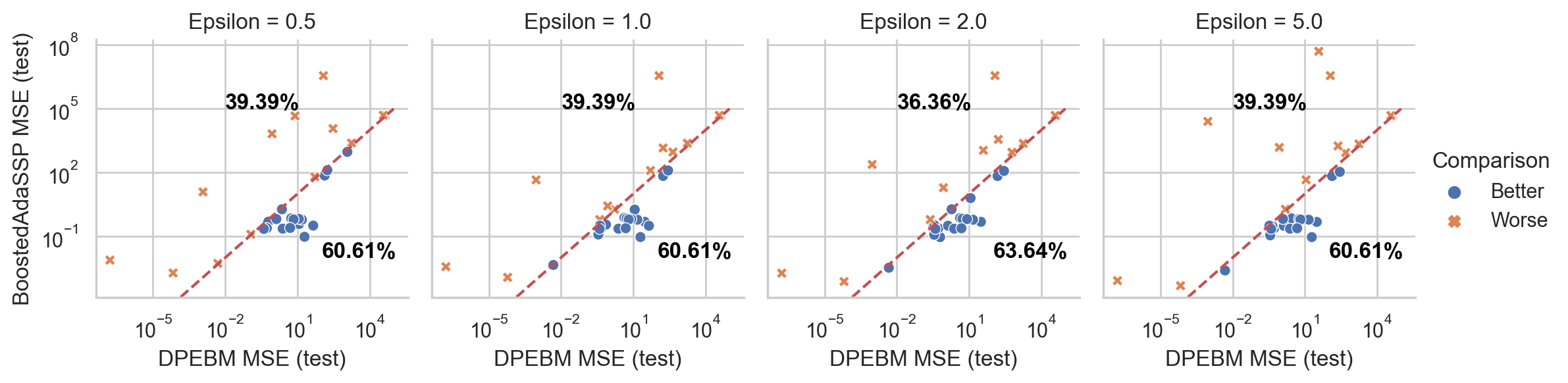}
        \caption{Fixed BoostedAdaSSP vs. Non-privately Tuned DP-EBM}
        \label{fig:fixedvsopt_dpebm}
    \end{subfigure}    
    \begin{subfigure}[b]{\textwidth}
        \includegraphics[width=\linewidth]{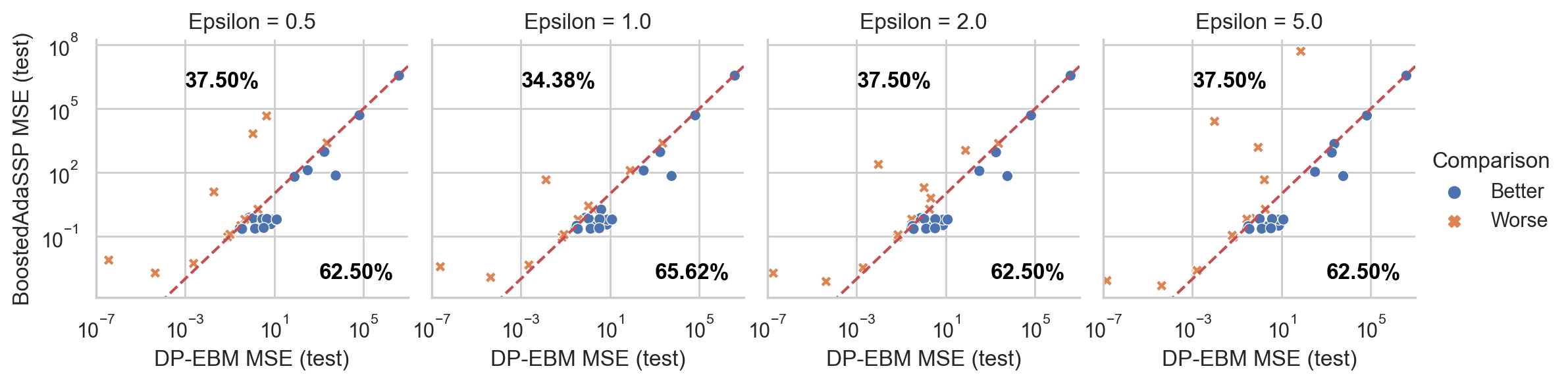}
        \caption{Fixed BoostedAdaSSP vs. Fixed DP-EBM}
        \label{fig:fixedvsfixed_dpebm}
    \end{subfigure}        
    \caption{
    \textbf{BoostedAdaSSP vs. DP-EBM.}
    BoostedAdaSSP and DP-EBM are both gradient boosting algorithms. BoostedAdaSSP uses linear models as the base class, whereas DP-EBM uses tree based models. Our method outperforms in all experimental regimes.
    }
    \label{fig.DP-EBM}
\end{figure}

\section{Theoretical Analysis}
\label{ss.clipping theory}
The improvement of BoostedAdaSSP over the base learner AdaSSP, from a theoretical perspective, can be attributed to the former's ability to adapt to arbitrary data clipping bounds. While the base learner AdaSSP is known to be optimal when the data clipping bounds are data-dependent and well-chosen (\cite{wang2018revisiting}, Theorem 3), it suffers from significant bias when the data clipping bounds are mis-specified (i.e. much closer to 0 relative to the data range). 

This bias exists even in the simplest ``zero-dimensional'' case where linear regression reduces to estimating the population mean of real-valued data. Consider a data set $Y_1, Y_2, \cdots, Y_n \stackrel{i.i.d.}{\sim} \mathcal{N}(\mu, 1)$. With $C_\tau(a) = a\min(1, |a|/\tau)$ denoting the clipping operator, the zero-dimensional AdaSSP estimator is simply $\hat \mu_1 = n^{-1}\sum_{i \in [n]} C_\tau(Y_i) + Z$, where $Z$ is the requisite Gaussian noise for differential privacy. The bias of the AdaSSP estimator, $|\E \hat\mu_1 - \mu|$, is then at least $|\mu| - \tau$, since $|\E \hat \mu_1| \leq \tau$.

In contrast, the BoostedAdaSSP algorithm converges to the population mean $\mu$ for any non-zero clipping bound $\tau$. The zero-dimensional BoostedAdaSSP algorithm for estimating $\mu$ from $Y_1, Y_2, \cdots, Y_n \stackrel{i.i.d.}{\sim} \mathcal{N}(\mu, 1)$ is defined in Algorithm \ref{finite sample meanboosting}.
\begin{theorem}\label{thm:boostedadassp upper bound}
    For every $\tau  = O(1)$ not depending on sample size $n$, Algorithm \ref{finite sample meanboosting} is Gaussian DP with parameter $\rho$, and there exists a data-independent choice of number of boosting rounds $R$ such that the estimator $\hat\mu_R$ converges to the true parameter $\mu$, with the rate of convergence
    \begin{align}
        \E|\hat\mu_R - \mu| = O\left(\frac{\log n}{\sqrt{n}} + \frac{\log^{3/2} n}{n\sqrt{\rho}}\right).
    \end{align}
\end{theorem}
\begin{proof}[Proof of Theorem \ref{thm:boostedadassp upper bound}]
    The Gaussian DP of Algorithm \ref{finite sample meanboosting} follows from the Gaussian mechanism \ref{t.gm} and the composition theorem \ref{c.composition}, by observing that the sensitivity of the clipped sample mean is $2\tau/n$. Next, we establish the convergence of $\hat\mu_R$ by comparing with Algorithm \ref{infinite sample meanboosting}, an ``infinite-sample'' version of Algorithm \ref{finite sample meanboosting}.
\end{proof}

\begin{minipage}{0.475\textwidth}
\begin{algorithm}[H]
		\caption{Zero-dimensional BoostedAdaSSP}
		\label{finite sample meanboosting}
		\begin{algorithmic}
			\Require Clipping function $C_\tau$, number of rounds $R$, Gaussian DP parameter $\rho$.
			\Require Data $Y_1, Y_2, \cdots, Y_n \stackrel{\text{i.i.d.}}{\sim} \mathcal{N}(\mu, 1)$.
			
			\State Initialize: $\hat\mu_0 = 0$
			
			\For{$t \in [R]$}
			\State Compute DP residual mean
			\begin{align}\label{eq: finite-sample iterate}
			    \hat\mu_j &= \hat\mu_{j-1} + \frac{1}{n}\sum_{i \in [n]}C_\tau(Y_i - \hat\mu_{j-1}) + Z_j, \end{align}
       \text{where } $Z_j \sim \mathcal{N}\left(0, \frac{4R\tau^2}{n^2\rho}\right)$.
			
			\EndFor
			\State Output $\hat\mu_R$

		\end{algorithmic}
	\end{algorithm}
\end{minipage}
\hfill
\begin{minipage}{0.475\textwidth}
 \begin{algorithm}[H]
		\caption{Infinite-sample algorithm}
		\label{infinite sample meanboosting}
		\begin{algorithmic}

			\Require Clipping function $C_\tau$, number of rounds $R$.
			\Require Infinite samples from $\mathcal{N}(\mu, 1)$.
			
			\State Initialize: $\theta_0 = 0$
			
			\For{$t \in [R]$}
			\State Compute true truncated residual mean 
			\begin{align}\label{eq: infinite-sample iterate}
			    \theta_j = \theta_{j-1} + \E_{Y \sim \mathcal{N}(\mu, 1)} C_\tau(Y - \theta_{j-1}).
			\end{align}
			\EndFor
			\State Output $\theta_R$

		\end{algorithmic}
	\end{algorithm}
\end{minipage}

By considering an idealized ``infinite sample'' setting where we have access to true distributional quantities $\{\E C_\tau(Y-\theta_{j-1})\}_{j \in R}$, Algorithm \ref{infinite sample meanboosting} removes all the randomness in the finite-sample Algorithm \ref{finite sample meanboosting} and allows us to focus entirely on the bias-reduction effect of boosting. Indeed, the infinite-sample ``estimator'' $\theta_R$ converges deterministically to $\mu$. 

\begin{proposition}\label{prop: infinite sample convergence}
    Suppose the number of rounds $R > \frac{\max(0, |\mu| - \tau)}{(\Phi(2\tau) - 1/2)\tau}$. The error of $\theta_R$ is bounded by
    \begin{align}
        |\theta_R - \mu| \leq \tau \left(3/2 - \Phi(\tau)\right)^{R - \frac{\max(0, |\mu| - \tau)}{(\Phi(2\tau) - 1/2)\tau}}.
    \end{align}
\end{proposition}
That is, after a warm-up of $\frac{\max(0, |\mu| - \tau)}{(\Phi(2\tau) - 1/2)\tau}$ rounds, the error of $\theta_R$ decays geometrically fast, as $0 < 3/2 - \Phi(\tau) < 1$ for any $\tau > 0$. It now suffices to bound the difference $|\theta_R - \hat\mu_R|$.

\begin{proposition}\label{prop: finite sample convergence}
    The difference between outputs of Algorithms \ref{finite sample meanboosting} and \ref{infinite sample meanboosting} is bounded by
    \begin{align}
        \E|\hat\mu_R - \theta_R| = O\left(\frac{R\tau}{\sqrt{n}} + \frac{R^{3/2}\tau}{n\sqrt{\rho}}\right).
    \end{align}
\end{proposition}
By choosing an $R = O(\log n)$ and $R > \frac{\max(0, |\mu| - \tau)}{(\Phi(2\tau) - 1/2)\tau}$, we have $|\theta_R - \mu| = O(\tau/n)$ by Proposition \ref{prop: infinite sample convergence}, and then 
\begin{align}
    \E|\hat\mu_R - \mu| \leq |\theta_R - \mu| + \E|\hat\mu_R - \theta_R|  = O\left(\frac{\tau}{n}\right) + O\left(\frac{\tau \log n}{\sqrt{n}} + \frac{\tau\log^{3/2} n}{n\sqrt{\rho}}\right).
\end{align}
As $\tau = O(1)$ by assumption, the main proof is complete. Propositions \ref{prop: infinite sample convergence} and \ref{prop: finite sample convergence} are proved in Section \ref{ss.proofs}.

\clearpage
\bibliography{main}
\bibliographystyle{alpha}

\newpage
\appendix

\onecolumn

\section{Appendix}
\subsection{Hyperparameters}
\begin{table}[h]
\caption{Hyperparameters of individual algorithms, and their individual tuning ranges. }
\label{tab:hyperparams}
\begin{tabular}{|cccccc|}
\hline
\multicolumn{1}{|c|}{Hyperparameters}     & \multicolumn{1}{c|}{\begin{tabular}[c]{@{}c@{}}Data \\ Clipping\\ Bound\end{tabular}} & \multicolumn{1}{c|}{\begin{tabular}[c]{@{}c@{}}Gradient \\ Clipping\\ Bound\end{tabular}} & \multicolumn{1}{c|}{\begin{tabular}[c]{@{}c@{}}\# Models\\ /Iterations\\ /Updates\end{tabular}} & \multicolumn{1}{c|}{\begin{tabular}[c]{@{}c@{}}Learning \\ Rate\\ Schedule\end{tabular}} & \# Leaves    \\ \hline
\multicolumn{1}{|c|}{Tuning Range}        & \multicolumn{1}{c|}{{[}1e-5,1e+5{]}}                                                  & \multicolumn{1}{c|}{{[}1e-5,1e+5{]}}                                                      & \multicolumn{1}{c|}{{[}1,4000{]}}                                                               & \multicolumn{1}{c|}{\{1,$t^{-1}$,$t^{-1/2}$\}}                                           & {[}1,1000{]} \\ \hline
\multicolumn{5}{|c|}{Algorithm}                                                                                                                                                                                                                                                                                                                                                                                            &              \\ \hline
\multicolumn{1}{|c|}{BoostedAdaSSP}       & \multicolumn{1}{c|}{Yes}                                                              & \multicolumn{1}{c|}{No}                                                                   & \multicolumn{1}{c|}{Yes}                                                                        & \multicolumn{1}{c|}{Yes}                                                                 & No           \\ \hline
\multicolumn{1}{|c|}{AdaSSP}              & \multicolumn{1}{c|}{Yes}                                                              & \multicolumn{1}{c|}{No}                                                                   & \multicolumn{1}{c|}{No}                                                                         & \multicolumn{1}{c|}{No}                                                                  & No           \\ \hline
\multicolumn{1}{|c|}{DP-Gradient Descent} & \multicolumn{1}{c|}{No}                                                               & \multicolumn{1}{c|}{Yes}                                                                  & \multicolumn{1}{c|}{Yes}                                                                        & \multicolumn{1}{c|}{Yes}                                                                 & No           \\ \hline
\multicolumn{1}{|c|}{TukeyEM}             & \multicolumn{1}{c|}{No}                                                               & \multicolumn{1}{c|}{No}                                                                   & \multicolumn{1}{c|}{Yes}                                                                        & \multicolumn{1}{c|}{No}                                                                  & No           \\ \hline
\multicolumn{1}{|c|}{DP-EBM}              & \multicolumn{1}{c|}{Yes}                                                              & \multicolumn{1}{c|}{No}                                                                   & \multicolumn{1}{c|}{Yes}                                                                        & \multicolumn{1}{c|}{No}                                                                  & Yes          \\ \hline
\multicolumn{6}{|c|}{Fixed Hyperparameters}                                                                                                                                                                                                                                                                                                                                                                                               \\ \hline
\multicolumn{1}{|c|}{BoostedAdaSSP}       & \multicolumn{1}{c|}{1}                                                                & \multicolumn{1}{c|}{-}                                                                    & \multicolumn{1}{c|}{100}                                                                        & \multicolumn{1}{c|}{1}                                                                   & -            \\ \hline
\end{tabular}

\end{table}

\subsection{AdaSSP Algorithm for Learning a Single Ridge Regressor}\label{a.adasspalgo}
Let $\;\widehat{.}\;$ denote private versions of the corresponding statistics. Then, AdaSSP privately releases the sufficient statistics of ridge regressor as follows.

\begin{algorithm}[h]
\caption{Private Ridge regression via AdaSSP(data $X, y$, calibration ratio $a, b, c$ , Privacy parameter $\epsilon, \delta$, Bound on $||\mathcal{X}||,||\mathcal{Y}||$)}
\label{a.adassp}
\begin{algorithmic}
\State Find $\mu$ such that $\mu$-GDP satisfies  $(\epsilon, \delta)$-DP. \texttt{ \# Corollary \ref{t.conversion}} 
\State Calibrate $\mu_1, \mu_2, \mu_3$ such that $\mu_1: \mu_2: \mu_3 = a:b:c$ and
$\mu = \sqrt{\mu_1^2 + \mu_2^2 + \mu_3^2}$.
\State Clip $X$ so that $\max_i||x_i||\leq ||\mathcal{X}||$
\State Clip $y$ so that $\max_i||y_i||\leq ||\mathcal{Y}||$
\State $\widehat{X^\top X} = GM(X^\top X, \mu_1, ||\mathcal{X}||)$
\State $\widehat{X^\top y} = GM(X^\top y, \mu_2, \sqrt{||\mathcal{X}||||\mathcal{Y}||})$
\State $\widehat{\lambda} = GM(\lambda_{\min} (X^\top X), \mu_3, ||\mathcal{X}||)$
\State Output $\widehat{\theta_t} =(\widehat{X^\top X} +\widehat{\lambda} I)^{-1} \widehat{X^\top y}$
\end{algorithmic}
\end{algorithm}

Algorithm \ref{a.adassp} instantiates three Gaussian mechanisms with $\mu_1, \mu_2,$ and $\mu_3$ to privately release each sufficient statistic.
Hence the composition 
\begin{align}
    \widehat{\theta_t} =(\widehat{X^\top X} +\widehat{\lambda} I)^{-1} \widehat{X^\top g_t}
\end{align}
is $(\epsilon, \delta)$-DP. Detailed proof is available in Theorem 3 of \cite{wang2018revisiting}.

\clearpage
\onecolumn
\subsection{Datasets}\label{app.data}
All 33 datasets in our experiments come from OpenML. Task information is listed in Tab. \ref{app.regression_tasks}. 

\begin{table}[h]
\centering
\caption{\textbf{Regression Tasks.} `Task ID' refers to the task id on OpenML, $n$ refers to the number of total data points, $d$ refers to the dimension of features after one-hot encoding the categorical features, and `True Bound' indicates the maximum absolute value of labels.}
\label{app.regression_tasks}

\begin{tabular}{|r|r|r|r|r|r|}
\hline
\textbf{Task ID} & \textbf{n} & \textbf{\# Columns} & \textbf{d} & \textbf{True Bound} & \textbf{n / d} \\ \hline
361072           & 8192       & 21                  & 21         & 99.000000           & 390.095238                \\ \hline
361073           & 15000      & 26                  & 26         & 100.000000          & 576.923077                \\ \hline
361074           & 16599      & 16                  & 16         & 0.078000            & 1037.437500               \\ \hline
361075           & 7797       & 613                 & 613        & 26.000000           & 12.719413                 \\ \hline
361076           & 6497       & 11                  & 11         & 9.000000            & 590.636364                \\ \hline
361077           & 13750      & 33                  & 33         & 0.003600            & 416.666667                \\ \hline
361078           & 20640      & 8                   & 8          & 13.122367           & 2580.000000               \\ \hline
361079           & 22784      & 16                  & 16         & 13.122367           & 1424.000000               \\ \hline
361080           & 53940      & 6                   & 6          & 9.842888            & 8990.000000               \\ \hline
361081           & 10692      & 8                   & 8          & 13.928840           & 1336.500000               \\ \hline
361082           & 17379      & 6                   & 6          & 977.000000          & 2896.500000               \\ \hline
361083           & 581835     & 9                   & 9          & 5.528238            & 64648.333333              \\ \hline
361084           & 21613      & 15                  & 15         & 15.856731           & 1440.866667               \\ \hline
361085           & 10081      & 6                   & 6          & 1.000000            & 1680.166667               \\ \hline
361086           & 163065     & 3                   & 3          & 11.958631           & 54355.000000              \\ \hline
361087           & 13932      & 13                  & 13         & 14.790071           & 1071.692308               \\ \hline
361088           & 21263      & 79                  & 79         & 185.000000          & 269.151899                \\ \hline
361089           & 20640      & 8                   & 8          & 1.791761            & 2580.000000               \\ \hline
361090           & 18063      & 5                   & 5          & 12.765691           & 3612.600000               \\ \hline
361091           & 515345     & 90                  & 90         & 2011.000000         & 5726.055556               \\ \hline
361092           & 8885       & 62                  & 82         & 1.000000            & 108.353659                \\ \hline
361093           & 4052       & 7                   & 12         & 2.300000            & 337.666667                \\ \hline
361094           & 8641       & 4                   & 5          & 40.000000           & 1728.200000               \\ \hline
361095           & 166821     & 9                   & 23         & 10.084141           & 7253.086957               \\ \hline
361096           & 53940      & 9                   & 26         & 9.842888            & 2074.615385               \\ \hline
361097           & 4209       & 359                 & 735        & 265.320000          & 5.726531                  \\ \hline
361098           & 10692      & 11                  & 17         & 13.928840           & 628.941176                \\ \hline
361099           & 17379      & 11                  & 20         & 977.000000          & 868.950000                \\ \hline
361100           & 39644      & 59                  & 73         & 13.645079           & 543.068493                \\ \hline
361101           & 581835     & 16                  & 31         & 5.528238            & 18768.870968              \\ \hline
361102           & 21613      & 17                  & 19         & 15.856731           & 1137.526316               \\ \hline
361103           & 394299     & 6                   & 26         & 6.480505            & 15165.346154              \\ \hline
361104           & 241600     & 9                   & 15         & 8.113915            & 16106.666667              \\ \hline
\end{tabular}
\end{table}

\clearpage

\subsection{Omitted Proofs in Section \ref{ss.clipping theory}} \label{ss.proofs}

\subsubsection{Proof of Proposition \ref{prop: infinite sample convergence}}

Let $\Delta_t = \theta_{t-1} - \mu$, so that $|\Delta_t|$ is the bias of Algorithm \ref{infinite sample meanboosting} after $t-1$ iterations. We would like to see $|\Delta_t|$ decay as quickly as possible in $t$. The following lemmas quantify the rate of decay.

\begin{lemma}\label{lem:smaller}
Let $t \geq 0$ and $\tau$ be the clipping threshold. Suppose $|\Delta_t| \leq \tau$. Then 
$
    | \Delta_{t+1} | \leq \left(\frac{3}{2} - \Phi(\tau)\right) |\Delta_{t}|,
$
where $\Phi(\cdot)$ is the standard Gaussian CDF.
\end{lemma}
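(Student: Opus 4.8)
The plan is to track how the one-round update acts on the bias and to show it contracts by the stated factor. Write $Z \sim \mathcal{N}(0,1)$ for a standard Gaussian, and unfold one iteration of Algorithm~\ref{meanboosting}. Before round $t$ the estimate $\hat\mu$ has bias $\mu_t = \hat\mu - \mu$, the round computes $\hat\mu_t = \E_{Y\sim P}\, C_\tau(Y - \hat\mu)$, and the update $\hat\mu \leftarrow \hat\mu + \hat\mu_t$ produces the new bias $\mu_{t+1} = \mu_t + \hat\mu_t$. Using that $Y - \mu$ has the same distribution as $Z$, so that $Y - \hat\mu = (Y-\mu) - \mu_t$ has the law of $Z - \mu_t$, I get the deterministic recursion
\begin{align}
\mu_{t+1} = \mu_t + \E_Z\, C_\tau(Z - \mu_t).
\end{align}
Since $\E_Z (Z - \mu_t) = -\mu_t$, this is exactly the clipping bias $\beta(\mu_t)$, where I define $\beta(m) := m + \E_Z\, C_\tau(Z - m) = \E_Z\big[\,C_\tau(Z-m) - (Z-m)\,\big]$. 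I would first record two structural facts: $\beta$ is odd, because $C_\tau$ is odd and $Z$ is symmetric, so it suffices to treat $0 \le \mu_t \le \tau$; and $\beta(0) = 0$ by symmetry.

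The key step is to write $\beta(\mu_t) = \int_0^{\mu_t} \beta'(s)\,ds$ and bound the derivative. I would compute $\beta'$ by differentiating through the expectation. Since $C_\tau$ is $1$-Lipschitz and piecewise linear with a.e. derivative $C_\tau'(w) = \mathbbm{1}(|w| < \tau)$, this yields
\begin{align}
\beta'(m) = 1 - \E_Z\big[\mathbbm{1}(|Z - m| < \tau)\big] = 1 - \Phi(m+\tau) + \Phi(m-\tau).
\end{align}
I expect this differentiation to be the main obstacle: the integrand $C_\tau(Z-m)$ is not smooth at the clipping points, so interchanging $\tfrac{d}{dm}$ and $\E_Z$ must be justified rather than assumed. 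The remedy is that the Gaussian expectation smooths the kink and Lipschitzness of $C_\tau$ supplies an integrable dominating bound for the difference quotients, so dominated convergence legitimizes the interchange and the a.e. derivative (an indicator) is what survives.

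Finally I would bound $\beta'(m)$ on $[0,\tau]$. The lower bound $\beta'(m) \ge 0$ is immediate since $\Phi \ge 0$. For the upper bound, monotonicity of $\Phi$ gives $\Phi(m+\tau) \ge \Phi(\tau)$ (as $m \ge 0$) and $\Phi(m-\tau) \le \Phi(0) = \tfrac12$ (as $m \le \tau$), hence
\begin{align}
\beta'(m) \le 1 - \Phi(\tau) + \tfrac12 = \tfrac32 - \Phi(\tau).
\end{align}
Integrating over $[0,\mu_t]$ gives $0 \le \beta(\mu_t) \le \big(\tfrac32 - \Phi(\tau)\big)\mu_t$ for $0 \le \mu_t \le \tau$, i.e. $|\mu_{t+1}| \le \big(\tfrac32 - \Phi(\tau)\big)|\mu_t|$, and the odd symmetry of $\beta$ extends this to $-\tau \le \mu_t < 0$, completing the proof. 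I note that the nonnegativity $\beta(\mu_t)\ge 0$ also records the ``directionally correct'' behavior emphasized in the discussion, and that evaluating $\beta'(\tau)$ would give the sharper constant $\tfrac32 - \Phi(2\tau)$, though the stated bound is all that is needed.
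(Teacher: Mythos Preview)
Your proof is correct and takes a genuinely different route from the paper's. The paper works directly with the residual distribution $P_t=\mathcal N(\mu_t,1)$ and decomposes $\E_{P_t}[C_\tau(Y)]$ into five pieces $T_1,\dots,T_5$ according to the intervals $(-\infty,-\tau)$, $[-\tau,2\mu_t-\tau]$, $[2\mu_t-\tau,\tau]$, $[\tau,\tau+2\mu_t]$, $(\tau+2\mu_t,\infty)$; symmetry kills $T_1$, the inner interval gives $T_3=\Pr[Y\in[2\mu_t-\tau,\tau]]\,\mu_t$, and a density-monotonicity argument on $T_4$ combined with $T_5$ yields the lower bound $\hat\mu_t\ge \tfrac{\mu_t}{2}\Pr_{P_t}[Y\in[-\tau,\tau+2\mu_t]]\ge \tfrac{\mu_t}{2}(2\Phi(\tau)-1)$, from which the contraction follows. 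Your argument instead packages the update as $\mu_{t+1}=\beta(\mu_t)$, differentiates once to get $\beta'(m)=1-\Phi(m+\tau)+\Phi(m-\tau)$, and bounds $\beta'$ on $[0,\tau]$ by monotonicity of $\Phi$. This is cleaner and, as you note, immediately reveals the sharper constant $\tfrac32-\Phi(2\tau)$ (since $\beta''(m)=\phi(m-\tau)-\phi(m+\tau)\ge 0$ on $[0,\tau]$, the maximum of $\beta'$ is at $m=\tau$). The trade-off is that the paper's interval decomposition also produces the explicit lower bound $\hat\mu_t\ge \tfrac{\mu_t}{2}\Pr_{P_t}[Y\in[-\tau,\tau+2\mu_t]]$, which is reused verbatim in the proof of Theorem~\ref{thm:adassp lower bound}; your derivative approach does not directly supply that inequality, though it could be recovered with a short extra computation.
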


\begin{lemma}\label{lem:bigger}
Let $t \geq 0$ and $\tau$ be the clipping threshold. Suppose $|\Delta_{t}| > \tau$. Then
$|\Delta_{t+1}| \leq |\Delta_{t}| - (\Phi(2\tau) - 1/2)\tau,$
where $\Phi(\cdot)$ is the standard Gaussian CDF.
\end{lemma}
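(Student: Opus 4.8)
The plan is to track the signed bias $\mu_t = \hat\mu - \mu$ and to reduce the lemma to a single scalar inequality. Writing $Y = \mu + Z$ with $Z \sim \mathcal{N}(0,1)$, the update of Algorithm \ref{meanboosting} gives $\mu_{t+1} = \mu_t + \E_Z\, C_\tau(Z - \mu_t)$. Since $C_\tau$ is odd and $Z$ is symmetric, $\E_Z\, C_\tau(Z - \mu_t) = -\E_Z\, C_\tau(\mu_t + Z)$, so setting $h(m) := \E_{Z \sim \mathcal{N}(0,1)}\, C_\tau(m + Z)$ the recursion collapses to $\mu_{t+1} = \mu_t - h(\mu_t)$. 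Because $h$ is odd, the case $\mu_t < -\tau$ is symmetric to $\mu_t > \tau$, so I would assume without loss of generality that $\mu_t > \tau > 0$. As $C_\tau \le \tau$ with strict inequality on a set of positive probability, $0 < h(\mu_t) < \tau$, hence $\mu_{t+1} = \mu_t - h(\mu_t) \in (0,\mu_t)$ and in particular $|\mu_{t+1}| = \mu_{t+1}$. The claim $|\mu_{t+1}| \le |\mu_t| - (\Phi(2\tau) - 1/2)\tau$ therefore reduces to proving $h(\mu_t) \ge (\Phi(2\tau) - 1/2)\tau$.

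Next I would reduce to the boundary case $\mu_t = \tau$. Differentiating under the expectation gives $h'(m) = \Pr[\,|m + Z| < \tau\,] = \Phi(\tau - m) - \Phi(-\tau - m) > 0$, so $h$ is strictly increasing; thus $h(\mu_t) \ge h(\tau)$ for every $\mu_t \ge \tau$, and it suffices to show $h(\tau) \ge (\Phi(2\tau) - 1/2)\tau$. To evaluate $h(\tau)$ cleanly I would pair each $z > 0$ with $-z$, using that $C_\tau(\tau + z) = \tau$ for all $z > 0$, while $C_\tau(\tau - z) = \tau - z$ for $0 < z < 2\tau$ and $C_\tau(\tau - z) = -\tau$ for $z \ge 2\tau$. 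Letting $\phi$ denote the standard Gaussian density, this yields the compact form
\begin{align}
h(\tau) = \int_0^\infty \big(C_\tau(\tau + z) + C_\tau(\tau - z)\big)\phi(z)\,dz = \int_0^{2\tau}(2\tau - z)\,\phi(z)\,dz.
\end{align}

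The crux is then the inequality $h(\tau) \ge (\Phi(2\tau) - 1/2)\tau$. Since $(\Phi(2\tau) - 1/2)\tau = \tau\int_0^{2\tau}\phi(z)\,dz$, subtracting reduces the goal to $\int_0^{2\tau}(\tau - z)\phi(z)\,dz \ge 0$. The integrand changes sign at $z = \tau$, so a crude pointwise bound such as $C_\tau(V) \ge -\tau + 2\tau\,\mathbbm{1}_{V \ge \tau}$ is too weak near $\mu_t = \tau$; the essential move is to exploit symmetry of $\phi$ about the midpoint $z = \tau$. Reflecting the tail $[\tau, 2\tau]$ onto $[0,\tau]$ via $z \mapsto 2\tau - z$ gives
\begin{align}
\int_0^{2\tau}(\tau - z)\phi(z)\,dz = \int_0^{\tau}(\tau - z)\big(\phi(z) - \phi(2\tau - z)\big)\,dz,
\end{align}
and for $z \in (0,\tau)$ both factors are nonnegative: $\tau - z > 0$, while $0 \le z < 2\tau - z$ together with monotonicity of $\phi$ on $[0,\infty)$ gives $\phi(z) \ge \phi(2\tau - z)$. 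Hence the integral is nonnegative, establishing $h(\tau) \ge (\Phi(2\tau) - 1/2)\tau$ and completing the proof. I expect this final symmetry argument to be the main obstacle: the bound is essentially tight as $\mu_t \downarrow \tau$, so any slack-free proof must extract exactly the probability mass $\Phi(2\tau) - 1/2$ lying between $z = 0$ and $z = 2\tau$, which is precisely what the reflection accomplishes.
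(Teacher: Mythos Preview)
Your proof is correct, but it takes a genuinely different route from the paper's. The paper works directly at an arbitrary $\mu_t>\tau$: writing $\hat\mu_t=\E_{P_t}[C_\tau(Y)]$ with $P_t=\mathcal N(\mu_t,1)$, it splits the expectation by regions. The symmetric tails $(-\infty,-\tau)$ and $(\tau+2\mu_t,\infty)$ cancel; the unclipped part over $[-\tau,\tau]$ is shown to be $\ge 0$ (because $P_t(y)\ge P_t(-y)$ for $y\in[0,\tau]$); and the clipped part over $[\tau,\tau+2\mu_t]$ contributes $\tau\cdot\Pr_{P_t}[Y\in[\tau,\tau+2\mu_t]]$, which after centering equals $\tau\cdot\Pr[Z\in[\tau-\mu_t,\tau+\mu_t]]\ge \tau\cdot\Pr[Z\in[0,2\tau]]=(\Phi(2\tau)-1/2)\tau$ by the containment $[0,2\tau]\subset[\tau-\mu_t,\tau+\mu_t]$.

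You instead reduce to the boundary case via monotonicity of $h(m)=\E\,C_\tau(m+Z)$ (through $h'(m)=\Pr[|m+Z|<\tau]>0$), compute $h(\tau)=\int_0^{2\tau}(2\tau-z)\phi(z)\,dz$ explicitly, and finish with the reflection $z\leftrightarrow 2\tau-z$. The paper's argument is shorter and purely probabilistic, never needing a derivative or an explicit integral; your argument isolates exactly why the bound is tight as $\mu_t\downarrow\tau$ and, once the monotonicity of $h$ is in hand, makes it transparent that any lower bound established at a single point propagates to all larger $\mu_t$. One minor point: your sentence ``As $C_\tau\le\tau$ with strict inequality on a set of positive probability, $0<h(\mu_t)<\tau$'' only justifies the upper bound; the lower bound $h(\mu_t)>0$ follows instead from the oddness and monotonicity you establish immediately afterward (or directly from $h(\mu_t)\ge h(\tau)>0$), so you may want to reorder that.
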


The lemmas taken together suggest that, if $|\mu| > \tau$, then it takes 
$\frac{|\mu| - \tau}{(\Phi(2\tau) - 1/2)\tau}$ rounds for the error $|\Delta_{t}|$ to decrease below $\tau$. As soon as $|\Delta_{t}| \leq \tau$, 
then $|\Delta_{t}|$ decays geometrically by a factor of $(3/2 - \Phi(\tau))$ each round. Then, for $R > \frac{\max(0, |\mu| - \tau)}{(\Phi(2\tau) - 1/2)\tau}$, the desired bound in Proposition \ref{prop: infinite sample convergence} follows.

It remains to prove the two lemmas.
\begin{proof}[Proof of Lemma \ref{lem:smaller}]
Let $m_t$ denote the true truncated residual mean in the $t$-th step. 

Throughout the proof, let $P_t$ denote $\mathcal N(\Delta_t, 1)$. Note that if $\mu_{t} = 0$, then $m_t= \Delta_t=\Delta_{t+1}$, and so the inequality holds. 

Now we consider the case where $\Delta_t \neq 0$. Without loss of generality, assume $\mu_{t} > 0$. Let $C$ denote the clipping operation, that is $C(Y) = Y \min(1, \tau /|Y|)$.
Then we can decompose the estimate $m_t$ as follows:
\begin{align}
m_t &= \E_{P_t}[C(Y)]\\
 &= \underbrace{\Pr_{P_t}[Y < -\tau] (-\tau) + \Pr_{P_t}[Y > \tau + 2\Delta_t] \tau}_{T_1}\nonumber\\ 
&+ \underbrace{\Pr_{P_t}[Y\in [-\tau, \tau+2\Delta_t]]  \E[C(Y) \mid Y\in [-\tau, \tau+2\Delta_t]]}_{T_2}
\end{align}
Since the distribution $P_t$ is symmetric about $\Delta_t$, $T_1=0$. 
Now we further decompose $T_2$ by considering three different intervals:
\begin{align}
    T_2 &= \underbrace{\Pr_{P_t}[Y \in [2\Delta_t - \tau, \tau]] \E_{P_t}[C(Y) \mid Y\in [2\Delta_t - \tau, \tau]]}_{T_3}\nonumber\\
    &+ \underbrace{\Pr_{P_t}[Y \in [- \tau, 2\Delta_t - \tau]] \E_{P_t}[C(Y) \mid Y\in [- \tau, 2\Delta_t - \tau]]}_{T_4}\nonumber\\
    &+ \underbrace{\Pr_{P_t}[Y \in [\tau, \tau + 2\Delta_t]] \E_{P_t}[C(Y) \mid Y\in [\tau, \tau + 2\Delta_t]]}_{T_5}
\end{align}

In the interval of $T_3$, no $Y$ is clipped, so $C(Y) = Y$. Since the interval is also centered at the mean $\Delta_t$, the conditional expectation is $\Delta_t$, so
\begin{equation}
\label{T3} T_3 = \Pr_{P_t}[Y \in [2\Delta_t - \tau, \tau]] \Delta_t.    
\end{equation}

In the interval of $T_5$, each $Y$ is clipped, so
\begin{equation}
\label{T5} T_5 = \Pr_{P_t}[Y \in [\tau, \tau + 2\Delta_t]] \tau.    
\end{equation}

In the interval of $T_4$, no $Y$ is clipped. Moreover, for any $y, y' \in [-\tau, 2\Delta_t - \tau]$ such that $y > y'$, the density $P_t(y) > P_t(y')$. This allows us to lower bound $T_4$:
\begin{align}
\notag    T_4 &= \int_{[-\tau , 2\Delta_t - \tau]} Y \, P_t(Y) \\
\notag &=  \int_{[\Delta_t - \tau, 2\Delta_t - \tau]} Y \, P_t(Y) + \int_{[-\tau , \Delta_t - \tau]} Y \, P_t(Y) \\
\notag    &= \int_{[\Delta_t - \tau , 2\Delta_t - \tau]} \left[Y\, P_t(Y) + (2 \Delta_t - 2\tau - Y) P_t(2 \Delta_t - 2\tau - Y)\right]\\
\label{abcd}    &\geq \int_{[\Delta_t - \tau , 2\Delta_t - \tau]}
(\Delta_t - \tau) \, (P_t(Y) + P_t(2 \Delta_t - 2\tau - Y))\\
\notag &= (\Delta_t - \tau) \int_{[-\tau , 2\Delta_t - \tau]} P_t(Y) \\
\label{T4} &= (\Delta_t - \tau) \Pr_{P_t}[Y\in [-\tau , 2\Delta_t - \tau]]
\end{align}
where the step in inequality \eqref{abcd} follows from the fact that for any four numbers $a, b, c, d$ such that $a\geq c$ and $b\geq d$, then $ab  + cd \geq \frac{(a+c)}{2}(b+d)$. Finally, note that $\Pr_{P_t}[Y\in [-\tau , 2\Delta_t - \tau]] = \Pr_{P_t}[Y\in [\tau , 2\Delta_t + \tau]]$ since the two intervals are symmetric about $\Delta_t$. Thus,
\begin{align}
    T_4 + T_5 = \frac{\Delta_t}{2} \left( \Pr_{P_t}[Y\in [-\tau , 2\Delta_t - \tau]] + \Pr_{P_t}[Y\in [\tau , 2\Delta_t + \tau]]\right)
\end{align}

Putting \eqref{T3}, \eqref{T5} and \eqref{T4} together, we get
\begin{align}\label{hat mu_t lower bound}
m_t= T_3 + T_4 + T_5 \geq \frac{\Delta_t}{2} \Pr_{P_t}[Y \in [-\tau , \tau + 2\Delta_t]]
\end{align}

Finally, note that
\begin{align}
\Pr_{P_t}[Y \in [-\tau , \tau + 2\Delta_t]] &= \Pr_{P_t}[Y - \Delta_t \in [-\tau - \Delta_t , \tau + \Delta_t]] \\ 
&\geq \Pr_{P_t}[Y - \Delta_t \in [-\tau , \tau]] \\ &= 2 \Phi(\tau) - 1
\end{align}
This means
\begin{align}
\Delta_{t+1} = \Delta_t - m_t\leq (3/2 - \Phi(\tau)) \Delta_t
\end{align}
which completes the proof.
\end{proof}

\begin{proof}[Proof of Lemma \ref{lem:bigger}]
Throughout the proof, let $P_t$ denote $\mathcal N(\Delta_t, 1)$. Without the loss of generality, assume $\Delta_t > 0$. We start by decomposing the mean of the clipped distribution:
\begin{align}
m_t &= \E_{P_t}[C(Y)]\\
 &= \underbrace{\Pr_{P_t}[Y < -\tau] (-\tau) + \Pr_{P_t}[Y > \tau + 2\Delta_t] \tau}_{T_1}\nonumber\\
&+ \underbrace{\Pr_{P_t}[Y\in [-\tau, \tau+2\Delta_t]]  \E[C(Y) \mid Y\in [-\tau, \tau+2\Delta_t]]}_{T_2}
\end{align}
By the symmetric property of $P_t$, $T_1 = 0$. Now we will further break down $T_2$ into the parts that were unclipped and clipped:
\begin{align}
\notag    T_2 &= \underbrace{\Pr_{P_t}[Y\in [-\tau, \tau]]  \E[Y\mid Y\in [-\tau, \tau]]}_{T_3}\\
    &+ \underbrace{\Pr_{P_t}[Y\in [\tau, 2\Delta_t + \tau]] \tau}_{T_4}
\end{align}
First, note that for each $y\in [0, \tau]$, we have $P_t(y) \geq P_t(-y)$ since $y$ is closer than $-y$ to the mean $\Delta_t$. This implies
\begin{align}
\notag    T_3 &= \int_{[-\tau, \tau]} Y \, P_t(Y)\\
\notag   &= \int_{[0, \tau]} [Y \, P_t(Y) + (-Y) \, P_t(-Y)]\\
    &\geq \int_{[0, \tau]} [Y \, P_t(Y) + (-Y) \, P_t(Y)] = 0
\end{align}
Finally, we note that in $T_4$, the probability of interval can be lower bounded as:
\begin{align}
\notag    \Pr_{P_t}[Y\in [\tau, 2\Delta_t + \tau]] &= \Pr_{Z\sim \mathcal{N}(0, 1)}[Z \in [\tau - \Delta_t, \Delta_t + \tau]]\\
 &\geq \Pr_{Z\sim \mathcal{N}(0, 1)}[Z \in [0, 2 \tau]]
\end{align}
where the last inequality follows from $\tau \leq \Delta_t$.
Thus, $m_t \geq (\Phi(2\tau) - 1/2) \tau$, which recovers the stated bound.\end{proof}

\subsubsection{Proof of Proposition \ref{prop: finite sample convergence}}
Define
\begin{align} 
    A_{j, n} := \frac{1}{n}\sum_{i \in [n]}C_\tau(Y_i - \theta_j) - \E C_\tau(Y - \theta_j), \quad B_{j, n} := \frac{1}{n}\sum_{i \in [n]} \left(C_\tau(Y_i - \hat\mu_j) - C_\tau(Y_i - \theta_j)\right),
\end{align}
then for every $j \in [R]$ it holds, by equations \eqref{eq: finite-sample iterate} and \eqref{eq: infinite-sample iterate}, that
\begin{align}\label{eq: comparison main expansion}
    \hat\mu_j - \theta_j = \hat\mu_{j-1} - \theta_{j-1} + A_{j-1, n} + B_{j-1, n} + Z_j.
\end{align}
To simplify the right side, observe that: 
\begin{itemize}
    \item each term in $B_{j-1, n}$, $C_\tau(Y_i - \hat\mu_{j-1}) - C_\tau(Y_i - \theta_{j-1})$ is of the same sign as $(Y_i - \hat\mu_{j-1}) - (Y_i - \theta_{j-1}) = \theta_{j-1} - \hat\mu_{j-1}$, since clipping preserves ordering: if $a \leq b$, then $C_\tau(a) \leq C_\tau(b)$;
    \item the magnitude $|C_\tau(Y_i - \hat\mu_{j-1}) - C_\tau(Y_i - \theta_{j-1})|$ is upper bounded by $|(Y_i - \hat\mu_{j-1}) - (Y_i - \theta_{j-1})| = |\theta_{j-1} - \hat\mu_{j-1}|$, as clipping is non-expansive: for any $a, b$, $|C_\tau(a) - C_\tau(b)| \leq |a-b|$.
\end{itemize}
It follows that $|\hat\mu_{j-1} - \theta_{j-1} + B_{j-1, n}| \leq |\hat\mu_{j-1} - \theta_{j-1}|$, and therefore \eqref{eq: comparison main expansion} implies
\begin{align}
     |\hat\mu_j - \theta_j| \leq |\hat\mu_{j-1} - \theta_{j-1}| + |A_{j-1, n}| + |Z_j|.
\end{align}
Since $|\hat\mu_0 - \theta_0| = 0$ by definition, we have
\begin{align*}
    |\hat\mu_R - \theta_R| \leq \sum_{j=1}^{R-1} |A_{j-1, n}| + \sum_{j=1}^{R} |Z_j|.
\end{align*}
The desired bound in Proposition \ref{prop: finite sample convergence} is then the consequence of two observations.
\begin{itemize}
    \item Each $A_{j, n}$ is the difference between the sample mean of i.i.d. bounded random variables $\{C_\tau(Y_i - \theta_j)\}_{i \in [n]}$ and their expectation. $\E|A_{j-1, n}| \leq \sqrt{\E A^2_{j-1, n}} = O\left(\frac{\tau}{\sqrt{n}}\right)$.
    \item The $Z_j$'s are independently drawn from $\mathcal{N}(0, \frac{4R\tau^2}{n^2\rho})$. We have $E |Z_j| = O(\frac{\sqrt{R}\tau}{n\sqrt{\rho}})$.
\end{itemize}
\clearpage

\subsection{Optimality of Boosting under Lossless Clipping}
\label{ss.non clipping theory}
When the true data scales $\|\mathcal X\|, \|\mathcal Y\|$ are known, \cite{wang2018revisiting} studies the rate of convergence of AdaSSP estimator $\hat\theta_{\text{AdaSSP}}$ to the non-private least squares estimator $\theta^*$ by showing that, under mild regularity conditions for the design matrix $X$, the squared distance $\|\hat\theta_{\text{AdaSSP}} - \theta^*\|^2$ is less than $C\frac{\|\mathcal X\|^2\text{tr}[(X^\top X)^{-2}]}{\epsilon^2/\log(1/\delta)}$ with probability at least $1-\delta/3$ (Theorem 3, \cite{wang2018revisiting}) for some constant $C$. As argued in the original paper, this rate of convergence is optimal for $(\epsilon, \delta)$-differentially private linear regression (\cite{bassily2014private, cai2021cost}). 

In Theorem \ref{thm:regression} we show that the boosted algorithm can attain the same rate of convergence, which helps explain why BoostedAdaSSP performs no worse than one-shot AdaSSP in our experiments when the clipping threshold is data-dependent.

\begin{theorem}
\label{thm:regression}
    If $y|X$ is sampled from a Gaussian linear model and the minimum eigenvalue of the Gram matrix satisfies $\lambda_{\min}(X^\top X) \geq \frac{\alpha n\|\mathcal X\|^2}{d}$ for some $\alpha > 0$, then there exists a high-probability event $E$ not depending on $y|X$ such that, as long as the number of boosting rounds $T = O(1)$, the BoostedAdaSSP estimator $\hat{\theta}_T$ satisfies 
    \begin{align}
        \E(\|\hat\theta_T - \theta^*\|^2|X, E) \leq C\frac{\|\mathcal X\|^2\text{tr}[(X^\top X)^{-2}]}{\epsilon^2/\log(1/\delta)}
    \end{align}
    for some constant $C$.
\end{theorem}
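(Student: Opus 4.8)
The plan is to reduce the boosting dynamics to a linear fixed-point iteration and then run a bias--variance decomposition on the final iterate. Writing $M = \widehat{X^\top X} + \widehat\lambda I$, $\Gamma = M^{-1}$, and $A = \Gamma X^\top X$, and using that in the lossless-clipping regime the clipping operator acts as the identity, the $t$-th residual satisfies $X^\top g_t = X^\top y - X^\top X\,\Theta_{t-1}$, where $\Theta_{t-1} = \sum_{k<t}\theta_k$ is the running parameter. Substituting $\theta_t = \Gamma(X^\top g_t + \xi_t)$ with $\xi_t \sim \mathcal N(0,\sigma^2 I)$ yields the recursion $\Theta_t = (I-A)\Theta_{t-1} + \Gamma X^\top y + \Gamma\xi_t$, whose fixed point is exactly the OLS solution $\theta^* = (X^\top X)^{-1}X^\top y$ (the noisy preconditioner affects only the convergence rate, not the fixed point). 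First I would unroll this to obtain the closed form $\hat\theta_T - \theta^* = -(I-A)^T\theta^* + \sum_{t=1}^T (I-A)^{T-t}\Gamma\xi_t$, isolating a deterministic bias term from a zero-mean privacy-noise term.

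Next I would define the high-probability event $E$. This is where the hypothesis $\lambda_{\min}(X^\top X)\ge \alpha n\|\mathcal X\|^2/d$ enters: on $E$, standard Gaussian and matrix-concentration tail bounds for the symmetric noise in $\widehat{X^\top X}$ and for $\widehat\lambda$ guarantee that $M\succ 0$ and that $M$ is within a constant factor of $X^\top X$, so that $A$, being similar to the symmetric matrix $M^{-1/2}X^\top X M^{-1/2}$, has all eigenvalues in an interval $[\gamma,1]$ with $\gamma$ bounded away from $0$. Consequently $I-A$ is a contraction whose powers $\|(I-A)^s\|$ decay geometrically with ratio $\rho = 1-\gamma < 1$ in the $M$-weighted norm. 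Crucially, $E$ is measurable with respect to $X$ and the privacy noise alone, so it does not depend on $y\mid X$ and conditioning on it leaves the label distribution untouched.

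For the bias term I would bound $\|(I-A)^T\theta^*\|$ by $\kappa\rho^T\|\theta^*\|$ (the factor $\kappa$ coming from the similarity transform), take expectation over $y$ in the Gaussian linear model, and choose $T$ large enough that this contribution falls below the target rate; because the eigenvalue assumption forces a strong contraction, a modest $T$ suffices. For the variance term I would compute $\E[\|\sum_t (I-A)^{T-t}\Gamma\xi_t\|^2\mid X,E] = \sigma^2\sum_{s=0}^{T-1}\mathrm{tr}[(I-A)^s M^{-2}((I-A)^\top)^s]$, diagonalize, and bound the geometric sum by $\mathrm{tr}[(X^\top X)^{-2}]$ up to constants. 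Finally I would substitute the per-round noise scale $\sigma^2 = \Delta^2/(\mu_2/\sqrt T)^2 = T\,\Delta^2/\mu_2^2$ with sensitivity $\Delta = O(\|\mathcal X\|\|\mathcal Y\|)$ and the GDP-to-DP relation $\mu_2 = \Theta(\epsilon/\sqrt{\log(1/\delta)})$ from Corollary \ref{t.conversion} to recover the claimed form $C\|\mathcal X\|^2\mathrm{tr}[(X^\top X)^{-2}]/(\epsilon^2/\log(1/\delta))$.

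The main obstacle is the factor $T$ introduced by splitting the privacy budget across rounds: each round's noise variance is $T$ times that of one-shot AdaSSP, and the damping by $(I-A)^{T-t}$ only prevents further compounding, leaving a residual factor growing with $T$. The crux is therefore to show that the number of rounds needed to drive the bias below the noise floor is small enough --- guaranteed precisely by the well-conditioning assumption $\lambda_{\min}(X^\top X)\ge \alpha n\|\mathcal X\|^2/d$, which makes $\rho$ bounded away from $1$ by a constant and hence the required $T$ a constant --- that the surviving factor is absorbed into $C$. Balancing the geometrically decaying bias against the linearly growing variance at this choice of $T$ is the delicate step; the remainder is concentration and linear algebra.
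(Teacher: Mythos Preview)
Your recursion, the definition of the event $E$, the contraction argument, and the treatment of the accumulated noise are all essentially what the paper does. The gap is in your handling of the bias term. You unroll from $\Theta_0=0$, obtain the bias $(I-A)^T\theta^*$, and bound it by $\kappa\rho^T\|\theta^*\|$ with $\rho$ ``bounded away from $1$ by a constant''. But a constant contraction rate $\rho<1$ does not make this term fall below the target in $O(1)$ rounds: the target $\|\mathcal X\|^2\mathrm{tr}[(X^\top X)^{-2}]/(\epsilon^2/\log(1/\delta))$ scales like $n^{-2}$ under the eigenvalue hypothesis, so forcing $\rho^{2T}\|\theta^*\|^2\lesssim n^{-2}$ requires $T=\Theta(\log n)$. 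With $T=\Theta(\log n)$ your per-round noise variance $\sigma^2\propto T$ inflates the variance term by the same $\log n$ factor, so you end up proving a bound with $C=C(\log n)$ rather than an absolute constant.

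The paper sidesteps this by unrolling from $\theta_1$ rather than $\Theta_0=0$: the first boosting iterate $\theta_1$ is literally a one-shot AdaSSP estimate run with budget $\mu_2/\sqrt{T}$, so for $T=O(1)$ one has $\E(\|\theta_1-\theta^*\|^2\mid X,E)$ already at the AdaSSP rate by invoking \cite{wang2018revisiting}. Then the bias term $(1-\kappa)^{T-1}\|\theta_1-\theta^*\|$ is at the target rate for \emph{any} constant $T\geq 1$, with no need to push the contraction down to $n^{-2}$. Equivalently, your bound $\|(I-A)^T\theta^*\|\leq\rho^T\|\theta^*\|$ throws away the fact that $I-A=M^{-1}(M-X^\top X)$ has operator norm $O(d^{3/2}/(n\mu_1))\to 0$ on $E$, not merely $<1$; using that sharper estimate (which amounts to re-deriving the AdaSSP bias bound) would also repair the argument with $T=O(1)$.
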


\begin{proof}
Let $M$ denote the sum of $\hat \lambda I$ (where $\hat\lambda$ is defined in Algorithm \ref{a.dpgb_linear}) and the symmetric Gaussian matrix perturbation to $X^\top X$. convergence of Algorithmsider two convergence of Algorithmsecutive iterates of Algorithm \ref{a.dpgb_linear}. We have
\begin{align}
    \theta_{t+1} = (X^\top X + M)^{-1}X^\top(y - X\theta_t) + Z_{t+1},
\end{align}
where $Z_{t+1} \sim \mathcal N_d\left(0, \nu^2(X^\top X + M)^{-2}\right)$ is the fresh Gaussian noise drawn in the $(t+1)$-th iteration. The coefficient $\nu^2$ in the covariance matrix depends on privacy parameters and data scales and shall be specified later.

With $\theta^* = (X^\top X)^{-1}X^\top y$, rearranging terms yields
\begin{align}
    \theta_{t+1} - \theta^* = (I - (X^\top X + M)^{-1}X^{\top} X)(\theta_t - \theta^*) + Z_{t+1}.
\end{align}
With the same choice of high-probability event as Section B of \cite{wang2018revisiting}, we have $0.5(X^\top X + \hat\lambda I) \preceq X^\top X + M \preceq 2(X^\top X + \hat\lambda I)$, and therefore there exists some absolute convergence of Algorithmstant $0 < \kappa < 1$ such that
\begin{align}
\|\theta_{t+1} - \theta^*\| \leq (1-\kappa)\|\theta_{t} - \theta^*\| + \|Z_{t+1}\|.
\end{align}
Iterating this noisy convergence of Algorithmtraction yields
\begin{align}
\|\theta_{t+1} - \theta^*\| \leq (1-\kappa)^t\|\theta_{1} - \theta^*\| + \sum_{j=1}^t (1-\kappa)^{t-j}\|Z_{j}\|.
\end{align}
To bound the right side in expectation, observe that $\E( \|\theta_{1} - \theta^*\|^2|X, E)$ is of the same order as $\E\|\hat\theta_{\text{AdaSSP}} - \theta^*\|^2|X, E$ when $T = O(1)$. For the noise term, again with $T = O(1)$, the requisite $\nu^2$ in AdaSSP is of the order $\frac{\|\mathcal X\|\|\mathcal Y\|}{\epsilon^2/\log(1/\delta)}$; the overall magnitude of the noise term $\frac{\|\mathcal X\|^2\|\mathcal Y\|^2}{\epsilon^2/\log(1/\delta)}\text{tr}[(X^\top X + M)^{-2}]$ is bounded by $C\frac{\|\mathcal X\|^2\text{tr}[(X^\top X)^{-2}]}{\epsilon^2/\log(1/\delta)}$ under the high probability event for $(X^\top X + M)^{-1}$ and with the same choice of $C$ in Theorem 2(iii) of \cite{wang2018revisiting}.
\end{proof}
\clearpage

\subsection{Some Additional Theoretical Perspectives}
In this section, we provide some additional theoretical justification of BoostedAdaSSP. 
\begin{itemize}
    \item In Section~\ref{s.separation2}, we will prove a more fine-grained finite sample separation result between one-shot AdaSSP and BoostedAdaSSP: even if we know a bounded support of the data a priori at $[-B,B]$, and even if we can choose the threshold $\tau$ optimally as a function of this bound $B$ and the sample size $n$, BoostedAdaSSP can already adapt to the small-variance of the actual data distribution with $T=2$ steps, while non-BoostedAdaSSP with an optimally chosen $\tau$ cannot do any better than the worst-case that depends on the global boundedness parameter $B$.
    \item In Section~\ref{s.robustness}, we derive a new interpretation of BoostedAdaSSP as an iterative optimization algorithm optimizing a ``robustified'' objective function from a fixed-point iteration perspective. This analysis offers new theoretical insight into the practical benefits of choosing large $T$ and small $\tau$.
\end{itemize}

 Throughout this section, $[x]_{[a, b]} = \min(\max(x, b), a)$ denotes the clipping operator. To avoid the notational collision with the $\mu$-Gaussian Differential Privacy, we use define $\rho := \mu^2$ and will use $\rho$ for the privacy parameter throughout the section. This can be interpreted as $\rho$-zCDP or $\sqrt{\rho}$-GDP.  The symbol $\mu$ will reserved for the mean of the random variable.
\subsubsection{Finite-Sample Separation between AdaSSP and BoostedAdaSSP}\label{s.separation2}
To see the constant error incurred by AdaSSP in the finite-sample setting, consider estimating $\mu = \E Y $ using a private data set $Y_1,...,Y_n\sim \mathcal{P}$. The worst-case MSE of one shot AdaSSP is characterized by the following theorem. 

\begin{theorem}\label{theorem:mean separation}
Suppose there exists parameter $B,\sigma,\mu$, such that the distribution $\mathcal{P}$ of random variable $Y\in \mathbb R$ satisfies that  $\Pr(|Y|\leq B) = 1$,  (b) $\mu= \E Y $, (c) $Y-\mu$ is $\sigma^2$-subgaussian.  Let $\hat\mu^\tau_n(Y)$ be the AdaSSP estimator with clipping at $\tau$. If its zCDP parameter $\rho< n$ and $n \geq C\sqrt{\log n / \rho}$ for a universal constant $C$, then for any clipping level $\tau$ and any $n$, we have
\begin{equation}\label{eq:lowerbound_T=1}
\max_{\mathcal P} \E [(\hat{\mu}^{\tau}(Y) - \bar{\mu})^2]  \geq \frac{ \tau^2 + \mu^2}{18n^2\rho} + \frac{2{\max}^2(B-\tau, 0)}{9} 
\end{equation}
\end{theorem}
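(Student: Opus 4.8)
The plan is to prove the two summands as \emph{separate} worst-case lower bounds and then combine them through the elementary inequality $\max(a,b) \ge \tfrac12(a+b)$; this is exactly why the stated constants are $\tfrac{1}{18} = \tfrac12\cdot\tfrac19$ and $\tfrac{2}{9} = \tfrac12\cdot\tfrac49$. Concretely, in this zero-dimensional regime the one-shot AdaSSP estimator is $\hat\mu^\tau(Y) = \tfrac1n\sum_{i} C_\tau(Y_i) + Z$ with $C_\tau(y) = y\min(1,\tau/|y|)$ and $Z$ the Gaussian privacy noise, so I would first record the exact bias--variance decomposition
\begin{align}
\E[(\hat\mu^\tau(Y)-\mu)^2] = \big(\E C_\tau(Y)-\mu\big)^2 + \tfrac1n \mathrm{Var}(C_\tau(Y)) + \mathrm{Var}(Z).
\end{align}
I would then establish (i) $\max_{\mathcal P}\E[(\hat\mu^\tau-\mu)^2] \ge \tfrac{\tau^2+\mu^2}{9n^2\rho}$ and (ii) $\max_{\mathcal P}\E[(\hat\mu^\tau-\mu)^2] \ge \tfrac49\max^2(B-\tau,0)$, and conclude by averaging.

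For (i) I would use that AdaSSP is a $\rho$-zCDP ($\sqrt\rho$-GDP) mechanism, so any private-estimation lower bound applies to it. The $\tau^2/(n^2\rho)$ piece is immediate from the Gaussian mechanism (\Cref{t.gm}): the clipped sample mean has $\ell_2$-sensitivity $2\tau/n$, forcing $\mathrm{Var}(Z) = \Theta(\tau^2/(n^2\rho))$, which already exceeds $\tau^2/(9n^2\rho)$. For the $\mu^2/(n^2\rho)$ piece I would run a two-point (Le Cam) coupling argument: take $\mathcal P$ with mean $\mu$ and a scale perturbation $\mathcal P'$ (the law of $(1+\eta)Y$), whose mean differs by $\eta\mu$ while its total-variation distance to $\mathcal P$ is only $O(\eta)$; coupling $n$ i.i.d.\ draws so that they disagree in $\lesssim n\eta$ coordinates and invoking group privacy for zCDP (disagreement in $k$ coordinates costs $k^2\rho$), the two output laws are indistinguishable once $\eta\lesssim 1/(n\sqrt\rho)$, so no estimator can resolve a mean gap of $\eta\mu \sim \mu/(n\sqrt\rho)$, giving MSE $\gtrsim \mu^2/(n^2\rho)$. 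The regime hypotheses $\rho<n$ and $n\ge C\sqrt{\log n/\rho}$ are what let me pick such an $\eta$, keep the perturbed distributions inside the bounded, $\sigma^2$-subgaussian class, and still apply the coupling bound validly.

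For (ii) I would exhibit a single boundary-concentrated hard instance: a two-point distribution placing mass $q$ at $B$ and the balancing mass at a point inside $[-\tau,\tau]$, chosen so the mean equals $\mu$ and the $\sigma^2$-subgaussian constraint holds. Because $C_\tau$ maps the atom at $B$ down to $\tau$ while leaving the interior atom fixed, the population clipped mean satisfies $\E C_\tau(Y)-\mu = -q(B-\tau)$, so the squared bias alone is $q^2(B-\tau)^2$; arranging $q$ to be a constant (at least $2/3$) yields the claimed $\tfrac49\max^2(B-\tau,0)$. Combining (i) and (ii) via $\max(a,b)\ge\tfrac12(a+b)$ then gives the theorem. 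The main obstacle I anticipate is (ii): forcing the boundary atom to carry a constant fraction $q\gtrsim 2/3$ of the mass while simultaneously meeting the fixed-mean and subgaussian constraints pins down exactly how $\mu$, $B$, $\tau$, and $\sigma$ must relate, and tracking this carefully (rather than leaking factors) is what produces the sharp $2/9$ constant; a secondary subtlety is making the scale-perturbation coupling in (i) fully rigorous --- in particular verifying that the perturbed law remains inside the parameter class and that the zCDP group-privacy indistinguishability bound is invoked at the correct order.
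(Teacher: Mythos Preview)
Your proposal diverges from the paper's approach in a way that creates a genuine gap. The central issue is your model of the AdaSSP mean estimator. You write $\hat\mu^\tau(Y)=\tfrac1n\sum_i C_\tau(Y_i)+Z$, but the paper's estimator also privatises the sample size: $\hat\mu^\tau(Y)=\frac{\sum_i C_\tau(Y_i)+Z_2}{\max\{1,\,n+Z_1\}}$. That noisy denominator is precisely where the $\mu^2/(n^2\rho)$ term comes from. On the high-probability event $E_1=\{|Z_1|\le n/2\}$, the paper writes
\[
\hat\mu^\tau-\bar\mu \;=\; \underbrace{\frac{\sum_i\big([Y_i]_{[-\tau,\tau]}-Y_i\big)}{n+Z_1}}_{(*)}\;+\;\underbrace{\frac{Z_2-Z_1\bar\mu}{n+Z_1}}_{(**)},
\]
and the variance of $(**)$ already delivers both $\tau^2/(n^2\rho)$ (from $Z_2$) and $\mu^2/(n^2\rho)$ (from $Z_1\bar\mu$). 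There is no need for, and no room for, a Le Cam argument: the theorem concerns a \emph{fixed} estimator, and the family over which you maximise has a \emph{fixed} mean $\mu$, so your two-point scale perturbation $(1+\eta)Y$ (whose mean is $(1+\eta)\mu\neq\mu$) does not sit inside the constraint class. The information-theoretic route is both unnecessary and, as formulated, inapplicable.

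Two further points. First, the halving of constants is not obtained via $\max(a,b)\ge\tfrac12(a+b)$; it comes from the factor $\Pr(E_1)\ge 1-\delta\ge\tfrac12$ that appears when you uncondition from the event $|Z_1|\le n/2$. That is why the hypothesis $n\ge C\sqrt{\log n/\rho}$ is there: it guarantees $E_1$ has probability at least $1/2$ for the relevant $\delta$. Second, the hard instance is far simpler than your two-point construction: the paper takes the point mass $\Pr(Y=\mu)=1$, for which $\bar\mu=\mu$ and $(*)=\tfrac{n(\tau-\mu)_-}{n+Z_1}$, yielding the bias piece directly. Your attempt to place mass $q\ge 2/3$ at $B$ while keeping mean $\mu$ forces the second atom to $3\mu-2B$, which generically falls outside $[-\tau,\tau]$ and breaks your clipping calculation. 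If you adopt the paper's estimator and its point-mass instance, both summands drop out of a single direct computation on $(*)$ and $(**)$.
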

In addition,  there exists parameters $\tau_1,\tau_2$ such that BoostedAdaSSP for two iterations with threshold in first round chosen as $\tau_1$ and second round chosen as $\tau_2$ such that 
\begin{equation}\label{eq:upperbound_T=2}
 \E[ (\hat{\mu}_2^{\tau_{1:2}}(Y) - \bar{\mu})^2 ] = O\left(\frac{\sigma^2 \log(n\rho)+ \mu^2}{n^2\rho} + \frac{B^2+\mu^2}{n^4 \rho^2}\right).
\end{equation}
Theorem \ref{theorem:mean separation} implies that the one-step AdaSSP cannot gain by choosing $\tau < B$ without incurring a \emph{non-vanishing} constant asymptotic error. Moreover, for large $n$, a dependence on $B$ in the leading term is necessary no matter how $\tau$ is chosen.

Meanwhile, BoostedAdaSSP with $T=2$ is able to get rid of the dependence in $B$ from the leading term by adaptively choosing the clipping threshold. This separation can be orders-of-magnitude when $B \gg \max\{\sigma,\mu\}$.

We further note that the expression of interest we consider is how well a differentially private estimator can approximate the empirical mean $\bar{\mu}$. Results for estimating the population level mean parameter $\mu$ are directly implied, since $\E[(\bar{\mu} - \mu)^2] \leq \frac{\sigma^2}{n}$. Whenever $\rho$ is small, or $\mu\gg \sigma$ is large, or $B\gg q$, the additional error due to DP from \eqref{eq:lowerbound_T=1} and \eqref{eq:upperbound_T=2} could easily become larger than the statistical error for small to moderate sized data.  From this perspective, Boosting in AdaSSP could make a difference in enabling applications of private data analysis to significantly smaller datasets than its non-boosted counterpart.

\begin{proof}[{Proof of Theorem \ref{theorem:mean separation}}]
	Let the noise added be $Z_1, Z_2$ from the first round of AdaSSP.  Specifically, $Z_1$ is added to $n$,  and $Z_2$ is added to $\sum_i (Y_i)_{[-\tau,\tau]}$. $Z_3$ is added to $\sum_i (Y_i - \hat{\mu}_1)_{[-\tau_2,\tau_2]}$.
	
	Condition on the event $E_1$ that $|Z_1| \leq \sigma_1\sqrt{2\log(2/\delta)}$, which happens with probability $\geq 1-\delta$ by the standard Gaussian tail bound.
	Under the assumption $n > 2\sigma_1\sqrt{2\log(2/\delta)}$ (we will work out the choice of $\sigma_1$ and $\delta$ later such that this will match what's stated in the theorem), this implies that  $|Z_1| \leq n/2$.  Observe that the conditional random variable $Z_1 | E_1$ remains $\sigma^2$-subgaussian with a variance at least $0.5\sigma^2$. \footnote{This can be checked by the variance of truncated Gaussian random variable.}
Also observe that,  under the same event and assumption on $n$, the clipping in the denominator at $1$ does not occur, i.e., $\max\{1, n+Z_1\} = n + Z_1$.

Thus under $E_1$, we can write
	\begin{align}
	\hat{\mu}^\tau(Y) - \bar{\mu} &= \frac{\sum_i[Y_i]_{[-\tau,\tau]} + Z_2}{n + Z_1} -  \frac{\sum_i Y_i }{n } \nonumber\\
	&= \frac{\sum_i [Y_i]_{[-\tau,\tau]}  + Z_2}{n + Z_1} - \frac{\sum_i Y_i}{n + Z_1}+  \frac{\sum_i Y_i}{n + Z_1}-  \frac{\sum_i Y_i }{n } \nonumber\\
	&= \frac{\sum_i[Y_i]_{[-\tau,\tau]} - Y_i }{n+Z_1}  \frac{Z_2}{ n + Z_1} +  \frac{- Z_1 \sum_i Y_i}{n(n + Z_1)} \nonumber\\
	&= \underbrace{\frac{\sum_i[Y_i]_{[-\tau,\tau]} - Y_i }{n+Z_1}}_{(*)}  + \underbrace{\frac{Z_2 - Z_1 \bar{\mu}}{ n + Z_1}}_{(**)}. \label{eq:useful_decomp}
\end{align}

Next, we discuss two cases.  First consider  $\tau \geq  B$, the first term $(*)$ is $0$ and  it remains to bound the second term $(**)$
\begin{align}
\left|\frac{ (Z_2 - Z_1\bar{\mu})}{3n/2}\right|^2 \leq |\hat{\mu}^\tau- \bar{\mu}|^2 \leq \left|\frac{ (Z_2 - Z_1\bar{\mu})}{n/2}\right|^2
\end{align}
where conditioning on $Y_{1:n}$ and $E_1$,  $\frac{ (Z_2 - Z_1\bar{\mu})}{n/2}$ is  subgaussian with parameters $\frac{\sigma_2^2 + \bar{\mu}^2\sigma_1^2}{(n/2)^2}$.
Note that $\sigma_2^2 = \frac{\tau^2}{\rho}$ and $\sigma_1^2 = \frac{1}{\rho}$ (for an even splitting of the privacy budget). Taking conditional expectation gives
\begin{equation}\label{eq:square_err_sandwich}
\frac{4  ( \tau^2 + \bar{\mu}^2/4) }{9n^2\rho} \leq \E[(\hat{\mu}^\tau- \bar{\mu})^2  | \bar{\mu}, E_1] \leq  \frac{4  ( \tau^2 + \bar{\mu}^2) }{n^2\rho}.
\end{equation}
Take expectation over on both sides over $\bar{\mu} | E_1$ (notice that $\bar{\mu}$ and $E_1$ are independent) we obtain an upper and bound of the form $\frac{C ( \tau^2 + \mu^2 + \text{Var}(\bar{\mu})) }{n^2\rho}$ for constant $C=4$ and $C=1/9$ respectively.    

By Assumption (c), $\text{Var}(\bar{\mu})\leq \sigma^2/n$.   This gives rise to an upper bound 
\begin{align}
 \E[(\hat{\mu}- \bar{\mu})^2  | E_1] \leq  \frac{4 ( \tau^2 + \mu^2) }{n^2\rho} + \frac{4\sigma^2}{n^3\rho}.
\end{align}
Note that under $E_1^c$, we have
\begin{align}
\E[(\hat{\mu}^\tau- \bar{\mu})^2| E_1^c]= \E[(\sum_i[Y_i]_{[-\tau,\tau]} + Z_2 -  \frac{\sum_i Y_i }{n })^2| E_1^c] \leq  n\tau^2 + \frac{\tau^2}{\rho}+ \mu^2 + \frac{\sigma^2}{n}.
\end{align}
It follows that  if we choose $\delta \leq 1/n^3$ (under the assumption $\rho < n$)
\begin{align}
 \E[(\hat{\mu} - \bar{\mu})^2 ] \leq  (1-\delta)  \E[(\hat{\mu}^\tau- \bar{\mu})^2  | E_1]  + \delta \E[(\hat{\mu}^\tau - \bar{\mu})^2| E_1^c]=  O(\frac{( \tau^2 + \mu^2 +\sigma^2/n) }{n^2\rho} ),
\end{align}

Moreover, it is clear that by a subgaussian concentration bound, we can prove a high probability error bound (in the $\tau\geq B$ regime) which says that with probability $1-2\delta$,
\begin{align}
	(\hat{\mu}^\tau- \bar{\mu})^2  \leq \frac{8( \tau^2 + \mu^2 \sigma^2 / n)  \log(4/\delta)}{n^2\rho}. \label{eq:high_prob_bound_stage1}
\end{align}
 To get a matching lower bound, consider a particular $\mathcal P$ such that
 $\Pr(Y =\mu)  = 1$, thus $\text{Var}(\bar{\mu}) \geq 0$ and 
\begin{align}
\max_{\mathcal P}\E[(\hat{\mu}^\tau(Y) - \bar{\mu})^2 ]  \geq \frac{(1-\delta)( \tau^2 + \mu^2) }{9n^2\rho}.
\end{align}

Now let's consider the case when $\tau < B$.  We can still start from \eqref{eq:useful_decomp}. Observe that we can still use \eqref{eq:square_err_sandwich} to bound $(**)$, in fact, we can obtain the same $\E[(**)^2]  \geq \frac{(1-\delta)( \tau^2 + \mu^2) }{9n^2\rho}$ for any $\tau< B$ too.
It remains to construct a lower bound $(*)$ for  using the same family of distributions. The idea is that $(*)$ will introduce additional bias that is not vanishing even if $n\rightarrow \infty$.

Again, we will consider a trivial distribution where $\Pr(Y =\mu)  = 1$.  Assume $\mu>\tau$, then
\begin{align}
\max_{\mathcal P}\E\left[\left|\frac{\sum_i[Y_i]_{[-\tau,\tau]} - Y_i }{n+Z_1} \right|^2 \right]&\geq\max_{\mu >\tau}\E[(\frac{n(\mu-\tau)_+}{n+ Z_1})^2]\\
& \geq  \max_{\mu >\tau}\Pr(E_1)(\frac{n(\mu-\tau)_+}{3n/2})^2= \frac{4(1-\delta)\max^2(B-\tau, 0)}{9},
\end{align}
where we applied the $|Z_1| \leq n/2$ which is implied by $E_1$, for which we will choose $\delta < 1/2$. 

Clearly, the above lower bound says that if we stick with $T=1$, one cannot gain anything by choosing $\tau< B$ in terms of the max error. 

Next, we will analyze the algorithm with $T=2$. We will set $\tau_1 = B$ and $\tau_2 = O\left(\max\{\frac{B}{n\sqrt{\rho}},\sigma\} \sqrt{\log(n)}\right)$.

The second boosting round will estimate
\begin{align}
\hat{\mu}_2 = \hat{\mu}_1 + \frac{\sum_i [Y_i - \hat{\mu}_1]_{[-\tau_2,\tau_2]} + Z_3}{n+Z_1}.
\end{align}
where $Z_3 \sim \cN(0,\sigma_3^2)$

Besides $E_1$, we will further consider the following high probability events.

($E_2$) Following \eqref{eq:high_prob_bound_stage1},with probability $1-2\delta$,  $\left|\hat{\mu}_1-\bar{\mu}\right|\leq \frac{CB\sqrt{\log(4/\delta)}}{n\sqrt{\rho}}$.

($E_3$) With probability $1-\delta$ for all $i=1,...,n$, $|Y_i - \mu| \leq  \sigma\sqrt{2\log(2n /\delta)}$.   

($E_4$)Again by subgaussian concentration,  with probability $1-\delta$, $|\bar{\mu} - \mu| \leq \sigma\sqrt{2\log(2/\delta)}$.  

Thus with by triangle inequality,
with probability $1-4\delta$,  
$|\hat{\mu}_1 -Y_i| \leq  \max\{\frac{B}{n\sqrt{\rho}},\sigma\} \sqrt{C\log(2n/\delta)}$  for a constant $C$.  

Let event $E = E_1\cap E_2 \cap E_3 \cap E_4$. Under event $E$, when $\tau_2$ is set with such a bound, clipping will not happen for any data point and
\begin{align}
\notag \hat{\mu}_2  - \bar{\mu}&= \hat{\mu}_1   -   \frac{\sum_{i=1}^n (- \hat{\mu}_1)}{n+Z_1}+ \frac{\sum_i Y_i + Z_3}{n+Z_1} - \bar{\mu}\\
\notag &=  \frac{Z_1 \hat{\mu}_1}{n+Z_1} + \frac{Z_3 + Z_1\bar{\mu}}{n+Z_1} \\
&= \frac{Z_1 (\hat{\mu}_1-\bar{\mu})}{n+Z_1} +\frac{Z_3 + 2Z_1\bar{\mu} }{n+Z_1}  
\end{align}

It follows that
\begin{align}
\E\left[(\hat{\mu}_2  - \bar{\mu})^2  | E \right] &\leq  2\E\left[( \frac{Z_1 (\hat{\mu}_1-\bar{\mu})}{n+Z_1})^2  \middle| E \right] +  2\E\left[(\frac{Z_3 + 2Z_1\bar{\mu} }{n+Z_1}  )^2 \middle| E \right] \\
&= O\left(\frac{B^2+\mu^2}{n^4 \rho^2} + \frac{\sigma^2 \log(n/\delta)+ \mu^2}{n^2\rho}\right).
\end{align}
Under the complement event $E^c$, we apply a trivial bound that can be enforced by the algorithm (clipping the estimate at $[-B,B]$ knowing that $\mu\in[-B,B]$)
\begin{align}
\E[(\hat{\mu}_2- \bar{\mu})^2| E^c]= 4B^2.
\end{align}

Choose $\delta = \frac{1}{n^4\rho^2}$, by the law of total expectation, we get
\begin{align}
\E\left[(\hat{\mu}_2  - \bar{\mu})^2\right]  =O\left(\frac{B^2+\mu^2}{n^4 \rho^2} + \frac{\sigma^2 \log(n\rho)+ \mu^2}{n^2\rho}\right),
\end{align}
which completes the proof.
\end{proof}

\subsubsection{Robustness to outliers via clipping and boosting --- a fixed-point iteration persepctive} \label{s.robustness}

An additional benefit of boosting is robustness to outliers. For simplicity, here we analyze the non-private version of BoostedAdaSSP without noise addition. The reason for this simplification is that the robustness of BoostedAdaSSP can be revealed even without noise. Adding noise for privacy entails straightforward modification to the analysis below.

We consider a contaminated dataset with $n$ data points $Y_1,...,Y_n$. Among them  $n-k$ are in-liers  drawn i.i.d. from $\mathcal P$ - an arbitrary distribution supported on $[\mu-\sigma,\mu+\sigma]$ with mean $\mu$, and $k$ are outliers with a value of $B$. Without loss of generality, we assume the in-liers are the first $n-k$.

A non-boosted algorithm will have to add noise proportional to $B$ to the non-private estimator given by $\hat{\mu} = \frac{kB + \sum_{i=1}^{n-k}Y_i}{n}$. The MSE for estimating $\mu:= \E_\mathcal P Y$ is
\begin{equation}\label{eq:mse_empiricalmean}
\E(\hat{\mu}-\mu)^2 =  \frac{k^2(B-\mu)^2}{n^2} + \frac{\sigma^2(n-k)}{2n^2}. 
\end{equation}
Note that  $|B-\mu|$ can be arbitrarily large. The estimator is thus vulnerable to outliers.

For the boosted version of the algorithm, we fix a relatively small clipping threshold $\tau$ through many iterations until convergence. The solution can be viewed as iteratively solving the nonlinear equation
\begin{equation}\label{eq:nonlinear_equation}
\sum_{i=1}^n   (Y_i -\mu)_{[-\tau,\tau]} = 0,
\end{equation}
because algorithmically, BoostedAdaSSP iteratively simulates the following fixed point equation until convergence:
\begin{equation}\label{eq:fixedpoint_iter}
\hat{\mu}_{t+1} = \hat{\mu}_t + \frac{1}{n}\sum_{i=1}^n   (Y_i -\hat{\mu}_t)_{[-\tau,\tau]}.
\end{equation}

Theorem \ref{theorem:robustness} explains why solution to \eqref{eq:nonlinear_equation}, to which the noiseless BoostedAdaSSP algorithm converges, is robust to outliers.
\begin{theorem}\label{theorem:robustness}
As $\tau\rightarrow 0$, the solution to \eqref{eq:nonlinear_equation} converges to $\mathrm{Median}(Y_{1:n})$.
For a constant $\tau$, the solution to \eqref{eq:nonlinear_equation} minimizes the Huber loss with radius $\tau$.
\end{theorem}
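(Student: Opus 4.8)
The plan is to prove the two claims in reverse order, since the Huber-loss characterization (the second claim) does most of the work and makes the median limit (the first claim) an easy consequence. The single observation driving everything is that the clipping operator is exactly the derivative of the Huber loss: writing $H_\tau(z) = \tfrac{1}{2}z^2$ for $|z|\le\tau$ and $H_\tau(z) = \tau|z| - \tfrac{1}{2}\tau^2$ for $|z|>\tau$, a direct check gives $H_\tau'(z) = [z]_{[-\tau,\tau]}$, and $H_\tau$ is $C^1$ with a nondecreasing derivative, hence convex.

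For the second claim I would introduce the objective $L_\tau(\mu) = \sum_{i=1}^n H_\tau(Y_i - \mu)$ and compute $L_\tau'(\mu) = -\sum_{i=1}^n [Y_i - \mu]_{[-\tau,\tau]}$. Thus $\mu$ solves \eqref{eq:nonlinear_equation} if and only if $L_\tau'(\mu) = 0$. Because each map $\mu \mapsto H_\tau(Y_i - \mu)$ is a convex function composed with an affine map, $L_\tau$ is convex, so its stationary points coincide with its global minimizers. This is exactly the assertion that the solution minimizes the Huber loss of radius $\tau$.

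For the first claim I would rescale, using that minimizing $L_\tau$ is the same as minimizing $\tfrac{1}{\tau}L_\tau$. A one-line estimate gives $\sup_z \bigl| \tfrac{1}{\tau}H_\tau(z) - |z| \bigr| \le \tfrac{3}{2}\tau$, so $\tfrac{1}{\tau}L_\tau(\mu) \to L_1(\mu) := \sum_{i=1}^n |Y_i - \mu|$ uniformly in $\mu$. I would first confine all solutions to the compact interval $[\min_i Y_i, \max_i Y_i]$: for $\mu$ outside this interval every summand $[Y_i-\mu]_{[-\tau,\tau]}$ shares one sign, so the sum cannot vanish. On this compact set the standard minimizer-under-uniform-convergence argument applies: for any $\mu$ one has $\tfrac{1}{\tau}L_\tau(\hat\mu_\tau) \le \tfrac{1}{\tau}L_\tau(\mu)$, and passing to the limit along a convergent subsequence $\hat\mu_\tau \to \mu^*$ yields $L_1(\mu^*) \le L_1(\mu)$. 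Since the minimizers of $L_1$ are precisely the medians of $Y_{1:n}$, every limit point is a median.

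The main point requiring care, rather than genuine difficulty, is the multiplicity of the minimizer. When $n$ is odd the median is unique and $\hat\mu_\tau$ converges to it outright; when $n$ is even both $L_1$ and (for small $\tau$) $L_\tau$ can have an entire interval of minimizers, so the precise statement is that every limit point lies in $[Y_{(n/2)}, Y_{(n/2+1)}]$, matching whatever convention $\mathrm{Median}(\cdot)$ selects. I would also note an alternative, more hands-on route that bypasses the Huber machinery: dividing \eqref{eq:nonlinear_equation} by $\tau$ and using $\tfrac{1}{\tau}[Y_i-\mu]_{[-\tau,\tau]} \to \mathrm{sign}(Y_i-\mu)$ for $\mu \notin \{Y_i\}$ reduces the equation to $\sum_i \mathrm{sign}(Y_i - \mu) = 0$, the defining stationarity condition of the median. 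This is intuitive but forces one to track the discontinuities of $\mathrm{sign}$ at the data points, which is exactly the bookkeeping the uniform-convergence argument above avoids.
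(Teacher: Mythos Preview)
Your proof is correct. For the second claim (the Huber-loss characterization) your argument is essentially identical to the paper's: both identify the clipping operator as the derivative of the Huber loss and invoke convexity to pass from stationarity to minimality.

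For the first claim the paper takes exactly the ``alternative, more hands-on route'' you sketch at the end: it divides \eqref{eq:nonlinear_equation} by $\tau$, observes that $\lim_{\tau\to 0}\tfrac{1}{\tau}(\mu-Y_i)_{[-\tau,\tau]}\in\partial|\mu-Y_i|$, and recognizes the resulting equation as the subgradient optimality condition for $\min_\mu\sum_i|Y_i-\mu|$. Your primary argument---rescaling the objective, establishing uniform convergence of $\tfrac{1}{\tau}L_\tau$ to $\sum_i|Y_i-\mu|$, and passing minimizers through via a compactness/subsequence argument---is a genuinely different route. It is longer but cleaner in one respect: by working at the level of objectives rather than first-order conditions you sidestep the discontinuity of $\mathrm{sign}(\cdot)$ at the data points, which the paper's two-line proof glosses over. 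Your discussion of the even-$n$ non-uniqueness is also more careful than the paper, which simply asserts the median is ``a valid solution.'' (A minor note: your uniform bound $\tfrac{3}{2}\tau$ is correct but loose; the sharp constant is $\tfrac{1}{2}\tau$.)
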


\begin{proof}[Proof of Theorem \ref{theorem:robustness}]
	When $\tau\rightarrow 0$, we may rewrite the equation as $\lim_{\tau\rightarrow 0}\frac{1}{\tau}\sum_{i=1}^n   (Y_i -\mu)_{[-\tau,\tau]} = 0$. Then, observe that $\lim_{\tau\rightarrow 0}\frac{1}{\tau}(\mu-Y_i)_{[-\tau,\tau]}\in \partial |\mu - Y_i|$, and a valid solution certifying the sub-gradient optimality condition for $\min_{\mu} \sum_i|Y_i - \mu|$ is the sample median.
	For the second statement, observe that  $(\mu - Y_i)_{[-\tau,\tau]} = \frac{\partial}{\partial \mu} \textrm{Huber}_\tau(\mu- Y_i) $. A fixed point satisfying the optimality condition is a minimizer.
\end{proof}

Minimizers of the Huber loss functions is among the most well-known robust M-estimators studied in the classical literature. In the regime when $\hat{\mu}_t$ is $\tau$-away from the true support, the iteration moves towards $\mu$ as long as the outlier is less than half of the data. Assume $2\sigma <\tau < B-\mu$, once $\hat{\mu}_t\in[\mu-\tau,\mu+\tau]$  the in-lier data will no longer be clipped, and the fixed point equation becomes 
$
\sum_{i=1}^{n-k} (Y_i   -\hat{\mu})   +  k(B-\hat{\mu})_{[-\tau,\tau]}  = 0,
$
which gives
$
\hat{\mu}_{\infty} = \frac{\sum_{i=1}^{n-k} Y_i + k(B-\hat{\mu}_{\infty} )_{[-\tau,\tau] }}{n-k},
$
with an MSE error bound of 
$\E(\hat{\mu} _{\infty} -\mu)^2\leq  \frac{2k^2\tau^2}{(n-k)^2} + \frac{2\sigma^2}{(n-k)}.$

Compared to the error of empirical mean estimator \eqref{eq:mse_empiricalmean}, the BoostedAdaSSP with a moderate $\tau$ and large $T$  converges to a solution that is significantly more robust to outliers.

\end{document}